\newcommand{\Absent}{\ensuremath{\mbox{\textit{Absent}}}}
\newcommand{\absent}{\ensuremath{\mbox{\textit{absent}}}}
\newcommand{\Bacteria}{\ensuremath{\mbox{\textit{Bacteria}}}}
\newcommand{\bacteria}{\ensuremath{\mbox{\textit{bacteria}}}}
\newcommand{\Coin}{\ensuremath{\mbox{\textit{Coin}}}}
\newcommand{\coin}{\ensuremath{\mbox{\textit{coin}}}}
\newcommand{\GoOut}{\ensuremath{\mbox{\textit{GoOut}}}}
\newcommand{\HasKeys}{\ensuremath{\mbox{\textit{HasKeys}}}}
\newcommand{\Heads}{\ensuremath{\mbox{\textit{Heads}}}}
\newcommand{\heads}{\ensuremath{\mbox{\textit{heads}}}}
\newcommand{\Inside}{\ensuremath{\mbox{\textit{Inside}}}}
\newcommand{\Location}{\ensuremath{\mbox{\textit{Location}}}}
\newcommand{\LockedOut}{\ensuremath{\mbox{\textit{LockedOut}}}}
\newcommand{\Outside}{\ensuremath{\mbox{\textit{Outside}}}}
\newcommand{\PickupKeys}{\ensuremath{\mbox{\textit{PickupKeys}}}}
\newcommand{\Present}{\ensuremath{\mbox{\textit{Present}}}}
\newcommand{\present}{\ensuremath{\mbox{\textit{present}}}}
\newcommand{\Rash}{\ensuremath{\mbox{\textit{Rash}}}}
\newcommand{\rash}{\ensuremath{\mbox{\textit{rash}}}}
\newcommand{\Resistant}{\ensuremath{\mbox{\textit{Resistant}}}}
\newcommand{\resistant}{\ensuremath{\mbox{\textit{resistant}}}}
\newcommand{\See}{\ensuremath{\mbox{\textit{See}}}}
\newcommand{\Tails}{\ensuremath{\mbox{\textit{Tails}}}}
\newcommand{\tails}{\ensuremath{\mbox{\textit{tails}}}}
\newcommand{\TakesMedicine}{\ensuremath{\mbox{\textit{TakesMedicine}}}}
\newcommand{\takesMedicine}{\ensuremath{\mbox{\textit{takesMedicine}}}}
\newcommand{\Toss}{\ensuremath{\mbox{\textit{Toss}}}}
\newcommand{\toss}{\ensuremath{\mbox{\textit{toss}}}}
\newcommand{\Weak}{\ensuremath{\mbox{\textit{Weak}}}}
\newcommand{\weak}{\ensuremath{\mbox{\textit{weak}}}}
\newcommand{\action}{\ensuremath{\mbox{\textit{action}}}}
\NewDocumentCommand\Bel{ g g g }{ \ensuremath{\textit{Bel}\IfNoValueTF{#1}{}{\IfNoValueTF{#2}{_{#1}}{ \IfNoValueTF{#3}{_{\langle #1, #2 \rangle}}{_{\langle #1, #2, #3 \rangle}}  } } } }
\newcommand{\belongsTo}{\ensuremath{\mbox{\textit{belongsTo}}}}
\newcommand{\body}{\ensuremath{\mbox{\textit{body}}}}
\newcommand{\botProgram}[2]{\textit{bot}_{#1} (#2)}
\newcommand{\causesOneOf}{\ensuremath{\mbox{ {\bf causes-one-of} }}}
\newcommand{\causesOutcome}{\ensuremath{\mbox{\textit{causesOutcome}}}}
\newcommand{\choiceElement}[3]{(#1,#2)@#3}
\newcommand{\cprop}{\ensuremath{\textit{cprop}}}
\newcommand{\definitelyPerformed}{\ensuremath{\mbox{\textit{definitelyPerformed}}}}
\newcommand{\dom}{\textit{dom}}
\newcommand{\ec}{\ensuremath{\mbox{\textit{ec}}}}
\newcommand{\effectChoice}{\ensuremath{\mbox{\textit{effectChoice}}}}
\newcommand{\Ev}{\ensuremath{\mbox{\textit{Ev}}}}
\newcommand{\eval}{\ensuremath{\mbox{\textit{eval}}}}
\newcommand{\evaluation}[2]{\ensuremath{\mbox{\textit{e}}_{#1} (#2)}}
\newcommand{\false}{\ensuremath{\mbox{\textit{false}}}}
\newcommand{\False}{\bot}
\newcommand{\fluent}{\ensuremath{\mbox{\textit{fluent}}}}
\newcommand{\fluentState}[1]{\tilde{#1}}
\newcommand{\fluentOrAction}{\ensuremath{\mbox{\textit{fluentOrAction}}}}
\newcommand{\hasInstant}{\ensuremath{\mbox{\textit{hasInstant}}}}
\newcommand{\hasInstants}{\ensuremath{\mbox{\textit{hasInstants}}}}
\newcommand{\head}{\ensuremath{\mbox{\textit{head}}}}
\newcommand{\holds}{\ensuremath{\mbox{\textit{holds}}}}
\newcommand{\hourglass}{\rotatebox{90}{$\bowtie$}}
\newcommand{\holdsWithProb}{\ensuremath{\mbox{ {\bf holds-with-prob }}}}
\newcommand{\ic}{\ensuremath{\mbox{\textit{ic}}}}
\newcommand{\id}{\ensuremath{\mbox{\textit{id}}}}
\newcommand{\ifBelieves}{\ensuremath{\mbox{ {\bf if-believes} }}}
\newcommand{\iLiteral}{\ensuremath{\mbox{\textit{iLiteral}}}}
\newcommand{\initialChoice}{\ensuremath{\mbox{\textit{initialChoice}}}}
\newcommand{\initialChoiceSymbol}{\hourglass}
\newcommand{\initiallyOneOf}{\ensuremath{\mbox{{\bf initially-one-of} }}}
\newcommand{\initialCondition}{\ensuremath{\mbox{\textit{initialCondition}}}}
\newcommand{\initialInstant}{\ensuremath{\mbox{$\bar{0}$}}}
\newcommand{\inOcc}{\ensuremath{\mbox{\textit{inOcc}}}}
\newcommand{\instant}{\ensuremath{\mbox{\textit{instant}}}}
\newcommand{\literal}{\ensuremath{\mbox{\textit{literal}}}}
\NewDocumentCommand\matr{m g g}{ \ensuremath{\mathbf{#1}\IfNoValueTF{#2}{}{ \IfNoValueTF{#3}{_{#2}}{_{#2,#3}} } } }
\newcommand{\maxinst}{\ensuremath{\mbox{\textit{maxinst}}}}
\NewDocumentCommand\model{ g g }{ \ensuremath{M\IfNoValueTF{#1}{}{\IfNoValueTF{#2}{_{#1}}{ _{#1} (#2) }}}}
\NewDocumentCommand\modelStar{ g g }{ \ensuremath{M^{*}\IfNoValueTF{#1}{}{\IfNoValueTF{#2}{_{#1}}{ _{#1} (#2) }}}}
\newcommand{\narr}{\textit{narr}}
\DeclareRobustCommand{\mmodels}{\mathrel{||}\joinrel\Relbar}
\DeclareRobustCommand{\notmmodels}{\mathrel{||}\joinrel\not\Relbar}
\newcommand{\occ}{\ensuremath{\mbox{\textit{occ}}}}
\newcommand{\performed}{\ensuremath{\mbox{\textit{performed}}}}
\newcommand{\performedAt}{\ensuremath{\mbox{ {\bf performed-at} }}}
\newcommand{\possiblyPerformed}{\ensuremath{\mbox{\textit{possiblyPerformed}}}}
\newcommand{\possVal}{\ensuremath{\mbox{\textit{possVal}}}}
\renewcommand{\restriction}{\mathord{\upharpoonright}}
\newcommand{\senses}{\ensuremath{\mbox{ {\bf senses} }}}
\newcommand{\topProgram}[2]{\textit{top}_{#1} (#2)}
\newcommand{\tr}{tr}
\newcommand{\true}{\ensuremath{\mbox{\textit{true}}}}
\newcommand{\True}{\top}
\newcommand{\takesValues}{\ensuremath{\mbox{ {\bf takes-values} }}}
\newcommand{\tset}{\textit{tset}}
\newcommand{\update}[2]{\ensuremath{\mbox{$#1 \oplus #2$}}}
\newcommand{\vals}{\ensuremath{\mbox{\textit{vals}}}}
\newcommand{\weight}{\pi}
\newcommand{\withAccuracies}{\ensuremath{\mbox{ {\bf with-accuracies} }}}
\newcommand{\withProb}{\ensuremath{\mbox{ {\bf with-prob} }}}
\newcommand{\lit}[2]{\mbox{\ensuremath{ #1 \! = \! #2}}}
\newcommand{\ifor}[2]{\ensuremath{[#1]@#2}}
\newlength{\gapBeforeAxioms}
\newlength{\gapBetweenAxioms}
\newlength{\gapAfterAxioms}
\newlength{\axiomsIndent}
\newlength{\axiomsMedIndent}
\newlength{\axiomsBigIndent}
\theoremstyle{definition}
\newtheorem{proposition}{Proposition}
\theoremstyle{definition}
\newtheorem{corollary}{Corollary}
\theoremstyle{definition}
\newtheorem{scenario}{Scenario}
\theoremstyle{definition}
\newtheorem{example}{Example}
\theoremstyle{definition}
\theoremstyle{definition}
\newtheorem{definition}{Definition}
\theoremstyle{definition}
\newtheorem{notation}{Notation}
\theoremstyle{definition}
\newtheorem{lemma}{Lemma}
\theoremstyle{definition}
\begin{document}

\date{}
\title{\textbf{Foundations for a Probabilistic Event Calculus} \\\emph{Technical Report}}
\author{Fabio A. D'Asaro, Antonis Bikakis, Luke Dickens and Rob Miller\\
				{\it \normalsize \{\href{mailto:uczcfad@ucl.ac.uk}{uczcfad}, \href{mailto:a.bikakis@ucl.ac.uk}{a.bikakis}, \href{mailto:l.dickens@ucl.ac.uk}{l.dickens}, \href{mailto:r.s.miller@ucl.ac.uk}{r.s.miller}\}@ucl.ac.uk}\\ \\
				{\normalsize Department of Information Studies}\\
				{\normalsize University College London}}
\maketitle

\sloppy

\section{Introduction}

%
The Event Calculus (EC) \cite{kowalski1989logic} is a well-known approach to reasoning about the effects of a narrative of action occurrences (events) along a time line. This paper describes PEC, an adaptation of EC able to reason with probabilistic causal knowledge.
There are numerous applications for this kind of probabilistic reasoning, e.g.\ in modelling medical, environmental, legal and commonsense domains, and in complex activity recognition and security monitoring.
PEC's main characteristics are: i) it supports EC-style narrative reasoning, ii) it uses a \emph{possible worlds} semantics to naturally allow for \emph{epistemic} extensions, iii) it uses a tailored action language syntax and semantics,
iv) its generality allows in principle for the use of other models of uncertainty, e.g.\ Fuzzy Logic \cite{zadeh1965fuzzy} or Dempster-Schafer Theory \cite{dempster1967upper}, and
v) for a wide subset of domains it has a sound and complete ASP implementation.

Although other formalisms exist for probabilistic reasoning about actions, PEC is, to our knowledge, the only framework to combine these features.
We use the following two example scenarios to illustrate the main definitions and characteristics of our framework.

\begin{scenario}[Coin Toss] \label{scenario:coin}
	A coin initially (instant $0$) shows Heads. A robot can attempt to toss the coin, but there is a small chance that it will fail to pick it up, leaving the coin unchanged. The robot attempts to toss the coin (instant $1$).
\end{scenario}

\begin{scenario}[Antibiotic] \label{scenario:antibiotic}
	A patient has a rash often associated with a bacterial infection, and can take an antibiotic known to be reasonably effective. Treatment is not always successful, and if not may still clear the rash. Failed treatment leaves the bacteria resistant. The patient is treated twice (instants $1$ and $3$).
\end{scenario}

\begin{scenario}[Keys] \label{scenario:keys}
	Leaving the house without first picking up the keys causes being locked out. In the context of a daily routine, there is a small chance that a person forgets to pick up the keys before leaving the house at 7:40 AM.
\end{scenario}
\section{PEC}

\subsection{Syntax}
\begin{definition}[Domain Language] A \emph{domain language} is a tuple $\mathcal{L} = \langle \mathcal{F}, \mathcal{A}, \mathcal{V}, \vals, \mathcal{I}, \leq, \initialInstant \rangle $ consisting of a finite non-empty set $\mathcal{F}$ of \emph{fluents}, a finite set $\mathcal{A}$ of \emph{actions}, a finite non-empty set $\mathcal{V}$ of \emph{values} such that $\{ \True, \False \} \subseteq \mathcal{V}$, a function $\vals:\mathcal{F}\cup\mathcal{A} \rightarrow 2^{\mathcal{V}} \setminus \emptyset$, a non-empty set $\mathcal{I}$ of \emph{instants} and a minimum element $\initialInstant \in \mathcal{I}$ w.r.t. a total ordering $\leq$ over $\mathcal{I}$. For $A\in \mathcal{A}$ we impose $\vals(A)=\{ \True, \False \}$.
\end{definition}

\begin{example}
	An appropriate domain language for Scenario \ref{scenario:coin} would be $ \mathcal{L}_C = \langle \mathcal{F}_C, \mathcal{A}_C, \mathcal{V}_C, \vals_C, \mathbb{N}, \leq_\mathbb{N}, 0 \rangle$ where, $\mathcal{F}_C = \{ \Coin \}$, $ \mathcal{A}_C = \{ \Toss \} $, $\mathcal{V}_C = \{ \True, \False, \Heads, \Tails \}$, $\vals_C(\Coin)=\{ \Heads, \Tails \}$ and $\vals_C(\Toss)=\{ \True, \False \}$, $\mathbb{N}$ is the set of natural numbers (including 0), and $\leq_\mathbb{N}$ is the standard total ordering between naturals. Scenario \ref{scenario:antibiotic} could be captured by a language $ \langle \mathcal{F}_A, \mathcal{A}_A, \mathcal{V}_A, \vals_A, \mathbb{N}, \leq_\mathbb{N}, 0 \rangle$ where $\mathcal{F}_A= \{ \Bacteria, \Rash \}$, $ \mathcal{A}_A = \{ \TakesMedicine \} $, $\mathcal{V}_A = \{ \True, \False, \Weak, \Present, \Resistant, \Absent \}$,  and $ \vals_A $ is defined by $ \vals_A ( \Bacteria ) = \{ \Weak, \Resistant, \Absent \} $ and $ \vals_A ( \Rash ) = \{ \Present, \Absent \} $.
\end{example}

In what follows, all definitions are with respect to a domain language $ \mathcal{L} = \langle \mathcal{F}, \mathcal{A}, \mathcal{V}, \vals, \mathcal{I}, \leq, \initialInstant \rangle$.

We begin by defining what a (fluent) literal and a formula are in this language. Literals and formulas with time information attached are called i-literals and i-formulas respectively.

\begin{definition}[Fluent and Action Literals, i-literals] A \emph{fluent literal} is an expression of the form $\lit{F}{V}$ for some $F\in \mathcal{F}$ and $V\in \vals(F)$. A fluent is \emph{boolean} if $\vals(F)=\{ \True, \False \}$. An \emph{action literal} is either $\lit{A}{\True}$ or $\lit{A}{\False}$. When no ambiguity can arise, $\lit{Z}{\True}$ is sometimes abbreviated to $Z$ and $\lit{Z}{\False}$ is abbreviated to $\neg Z$ for $Z$ a fluent or action. An \emph{i-literal} is an expression of the form $\ifor{L}{I}$ for some (fluent or action) literal $L$ and some $I\in \mathcal{I}$.
\end{definition}

\begin{definition}[Formulas, i-formulas]  The set of \emph{formulas}, denoted by $\Theta$, is the closure of the set of literals under $\wedge$ and $\neg$ (with $\vee$ and $\rightarrow$ being defined as shorthand in the usual way). The set of \emph{i-formulas}, denoted by $\Phi$, is the closure of the set of i-literals under $\wedge$ and $\neg$. We use the shorthand $\ifor{\theta}{I}$ for the i-formula formed from the formula $\theta$ and the instant $I$ by replacing all literals $L$ occurring in $\theta$ by $\ifor{L}{I}$, e.g. $\ifor{\lit{F}{V} \rightarrow \lit{F'}{V'}}{3}$ is a shorthand for $\ifor{\lit{F}{V}}{3} \rightarrow \ifor{\lit{F'}{V'}}{3}$.
\end{definition}

\begin{example}
In Scenario \ref{scenario:coin} the i-literal $ \ifor{\lit{\Coin}{\Heads}}{3} $ indicates that the coin shows heads at instant 3, while $ \ifor{\neg \Toss}{2} $ indicates that the robot does not attempt to toss the coin at instant 2. In Scenario \ref{scenario:antibiotic}, $\ifor{\lit{\Rash} {\Present}}{0} \wedge \ifor{\lit{\Bacteria}{\Absent} \wedge \TakesMedicine}{3} $ indicates that the patient initially has a rash, and that she takes the medicine and the bacterial infection is absent at instant $3$.
\end{example}

\begin{definition}[State, Partial State, Fluent State] A \emph{state} $S$ is a set of literals, exactly one for each $F \in \mathcal{F}$ and $A\in \mathcal{A}$. A \emph{partial state} is a subset $X \subseteq S$ of a state $S$. Given a partial state $X$, we call its subset containing all and only the fluent literals in $X$ a \emph{partial fluent state}, and we denote it as $X \restriction \mathcal{F}$. For a state $S$ we call $S\restriction \mathcal{F}$ a \emph{fluent state}. We also define $X\restriction \mathcal{A}$ as the subset of $X$ containing all and only the action literals in $X$. The set of all states is denoted by $\mathcal{S}$, the set of all partial states is denoted by ${\cal X}$, and we use $\tilde{\mathcal{S}}$ and $\tilde{\cal X}$ to denote the sets $\{ S\restriction \mathcal{F} \mid S \in \mathcal{S} \}$ and $\{ X\restriction \mathcal{F} \mid X\in \mathcal{X} \}$ respectively. 
\end{definition}

\begin{example}
	One of the states we can build with the elements of the domain language $ \langle \mathcal{F}_A, \mathcal{A}_A, \mathcal{V}_A, \vals_A, \mathbb{N}, \leq_\mathbb{N}, 0 \rangle$ for Scenario \ref{scenario:antibiotic} is $ S^1_{A} = \{ \lit{\Bacteria}{\Resistant}, \lit{\Rash}{\Absent}, \neg \TakesMedicine \} $. Its associated fluent state is $ S^1_A \restriction \mathcal{F} = \{ \lit{\Bacteria}{\Resistant}, \lit{\Rash}{\Absent} \} $. Any arbitrary subset of $S^1_A$, e.g. $X^1_A = \{ \lit{\Rash}{\Absent}, \neg \TakesMedicine \}$, is a partial state, whereas any arbitrary subset of $ S_A^1 \restriction \mathcal{F} $, e.g. $ X_A^1 \restriction\mathcal{F} = \{ \lit{\Rash}{\Absent} \} $, is a partial fluent state.
\end{example}

\begin{definition}[Outcome, Projection Functions]\label{def:outcomes}
	An \emph{outcome} is a pair of the form $(\fluentState{X},P^+)$ for some $ \fluentState{X} \in \fluentState{\mathcal{X}} $ and $ P^+ \in (0,1] $. The two \emph{projection functions} $\chi$ and $\pi$ are such that $\chi( (\tilde{X},P^+) ) = \tilde{X}$ and $\pi( (\tilde{X},P^+) ) = P^+$ for any outcome. The set of all outcomes $\tilde{\mathcal{X}} \times (0,1]$ will be denoted by $ \mathcal{O} $.
\end{definition}

\begin{definition}[Weight of a Set of Outcomes]\label{def:weightOfASetOfOutcomes} Given a finite set of outcomes \[ B=\left\{ O_1, O_2, \dots, O_m \right \} \] we define the \emph{weight of O} as \[ \weight(B)=\sum_{i=1}^{m} \pi(O_i). \]
\end{definition}

\begin{notation} In the following, we will generally use:\\
	\noindent\hspace*{\axiomsIndent}$I, I', I_1, I'', I_2, \dots$ to denote elements of $\mathcal{I}$,\\
	\noindent\hspace*{\axiomsIndent}$A, A', A_1, A'', A_2, \dots$ to denote elements of $\mathcal{A}$,\\
	\noindent\hspace*{\axiomsIndent}$F, F', F_1, F'', F_2, \dots$ to denote elements of $\mathcal{F}$,\\
	\noindent\hspace*{\axiomsIndent}$V, V', V_1, V'', V_2, \dots$ to denote elements of $\mathcal{V}$,\\
	\noindent\hspace*{\axiomsIndent}$\theta, \theta', \theta_1, \theta'', \theta_2, \dots$ to denote formulas,\\
	\noindent\hspace*{\axiomsIndent}$\varphi, \varphi', \varphi_1, \varphi'', \varphi_2, \dots$ to denote i-formulas,\\
	\noindent\hspace*{\axiomsIndent}$P, P', P_1, P'', P_2, \dots$ to denote real values in $[0,1]$,\\
	\noindent\hspace*{\axiomsIndent}$P^{+}, P^{+}_1, P^{+}_2, \dots$ to denote real values in $(0,1]$,\\
	\noindent\hspace*{\axiomsIndent}$S, S', S_1, S'', S_2, \dots$ to denote elements of $\mathcal{S}$,\\
	\noindent\hspace*{\axiomsIndent}$X, X', X_1, X'', X_2, \dots$ to denote elements of $\mathcal{X}$,\\
	\noindent\hspace*{\axiomsIndent}$\fluentState{S}, \fluentState{S}', \fluentState{S}_1, \fluentState{S}'', \fluentState{S}_2, \dots$ to denote elements of $\fluentState{\cal S}$,\\
	\noindent\hspace*{\axiomsIndent}$\fluentState{X}, \fluentState{X}', \fluentState{X}_1, \fluentState{X}'', \fluentState{X}_2, \dots$ to denote elements of $\fluentState{\cal X}$,\\
	\noindent\hspace*{\axiomsIndent}$O, O', O_1, O'', O_2, \dots$ to denote outcomes.
\end{notation}

We now introduce the standard propositions of our language: v-propositions are used to declare which value a fluent may take, c-propositions are used to model the causal relationships of a domain, i-propositions declare the initial conditions, p-propositions are for the action occurrences, and h-propositions state that a given i-formula holds.

\begin{definition}[v-proposition]
A \emph{v-proposition} has the form
\begin{equation}\label{eq:vProposition}
F \takesValues \{ V_1, \dots, V_m \}
\end{equation} where $m\geq 1$ and $\{V_1, \dots, V_m\}=vals(F)$.
\end{definition}

\begin{definition}[c-proposition, Head and Body of a c-proposition]
A \emph{c-proposition} $c$ has the form
\begin{equation}\label{eq:cProposition}
\theta \causesOneOf \{ O_1, O_2, \dots, O_m \}
\end{equation}
where $O_i \in \mathcal{O}$, $\chi(O_i) \neq \chi(O_j)$ when $i\neq j$, $\theta$ is a formula such that $\theta$ Herbrand-entails\footnote{For two formulas $ \theta $ and $ \theta' $ we write that $ \theta $ Herbrand-entails $ \theta' $ if, taking literals as propositions, every classical Herbrand model of $\theta$ is also a Herbrand model of $\theta'$.} $\lit{A}{\True}$ for at least one $A\in \mathcal{A}$, and $\weight(\left\{ O_1, \dots, O_m \right\})=1$. 
$\body(C) \!=\! \theta$ and $\head(C) \!=\! \left\{ O_1, \dots, O_m \right\}$ are the \emph{body} and \emph{head} of $C$, respectively.  We often omit $O_i$ from $\head(C)$ if $\chi(O_i)=\emptyset$ (leaving it implicit since $\weight(\left\{ O_1, \dots, O_m \right\})=1$).
\end{definition}

\begin{definition}[i-proposition]\label{def:iProposition}
An \emph{i-proposition} has the form
\begin{equation}\label{eq:iProposition}
\initiallyOneOf \{ O_1, O_2, \dots, O_m \}
\end{equation}
where $O_i \!\in \!\mathcal{O}$, $\weight(\{ O_1, \dots, O_m \})\!=\!1$, $ \chi(O_i) \!\in\! \fluentState{\mathcal{S}}$, and $\chi(O_i) \!\neq\! \chi(O_j)$ when $i\!\neq\! j$.
\end{definition}

\begin{definition}[p-proposition]
A \emph{p-proposition} has the form
\begin{equation}\label{eq:pProposition}
A \performedAt I \withProb P^+
\end{equation}
where $P^+ \in (0,1]$ and $I$ is such that $I<I'$ for some other $I'\in\mathcal{I}$. When a p-proposition $p$ has the form \eqref{eq:pProposition} we say that \emph{$p$ has instant $I$}.

In the following, we will frequently use
\begin{equation*}
A \performedAt I
\end{equation*}
as a shorthand for the p-proposition
\begin{equation*}
A \performedAt I \withProb 1.
\end{equation*}

\end{definition}

\begin{notation} In the following, we will generally use lowercase letters to denote propositions, e.g. $ c, c', c_1, c'', c_2, \dots $ will be used for c-propositions.
\end{notation}

\begin{definition}[Domain Description]\label{def:domaindescription} A \emph{domain description} is a finite set ${\cal D}$ of v-propositions, c-propositions, p-propositions and i-propositions such that: (i) for any two distinct c-propositions in $\mathcal{D}$ with bodies $\theta$ and $\theta'$ respectively, $ \theta $ does not Herbrand-entail $ \theta' $, (ii) $\mathcal{D}$ contains exactly one i-proposition, (iii) $ \mathcal{D} $ contains exactly one v-proposition for each $F\in\mathcal{F}$ and (iv) if a p-proposition ``$ A \performedAt I \withProb P' $'' belongs to $ \mathcal{D} $, then there is no other p-proposition of the form ``$ A \performedAt I \withProb P'' $'' for some $ P''\in(0,1] $ that belongs to $ \mathcal{D} $.
\end{definition}

\begin{definition}[Action Narrative]\label{def:narrative} An \emph{action narrative} is any finite set of p-propositions. For $ \mathcal{D} $ a domain description, we define the action narrative $\narr(\mathcal{D})$ as the set of all p-propositions in $ \mathcal{D} $.
\end{definition}

\begin{example}\label{ex:coindomain} Scenario \ref{scenario:coin} can be modeled using the following domain description $ \mathcal{D}_C $:\\

\noindent
\hspace*{\axiomsIndent}$ \Coin \takesValues \{ \Heads, \Tails \} $\hfill(C1)\\

\noindent
\hspace*{\axiomsIndent}$ \initiallyOneOf \{ (\{\Coin=\Heads\}, 1) \}$ \hfill(C2)\\

\noindent
\hspace*{\axiomsIndent}$ \Toss \causesOneOf$ \hfill (C3)\\
\hspace*{\axiomsMedIndent}$\{ (\{\Coin=\Heads \}, 0.49), $\\
\hspace*{\axiomsMedIndent}$( \{ \Coin = \Tails \}, 0.49), $\\
\hspace*{\axiomsMedIndent}$( \emptyset , 0.02)\} $\\

\noindent
\hspace*{\axiomsIndent}$ \Toss \performedAt 1$ \hfill (C4)\\

\noindent
where (C1) is a v-proposition, (C2) is an i-proposition, (C3) is a c-proposition and (C4) is a p-proposition.
\end{example}

\begin{example}\label{ex:antibioticdomain}
Scenario \ref{scenario:antibiotic} can be modeled using the following domain description $\mathcal{D}_A$:\\

\noindent
\hspace*{\axiomsIndent}$ \Bacteria \takesValues \{ \Weak, \Resistant, \Absent \} $\hfill(A1)\\

\noindent
\hspace*{\axiomsIndent}$ \Rash \takesValues \{ \Present, \Absent \} $\hfill(A2)\\

\noindent
\hspace*{\axiomsIndent}$ \initiallyOneOf$ \hfill(A3) \\
\hspace*{\axiomsMedIndent}$\{ (\{\Bacteria=\Weak, \Rash=\Present\}, 9/10),$\\
\hspace*{\axiomsMedIndent}$(\{\Bacteria=\Absent, \Rash=\Present\}, 1/10)\}$\\

\noindent
\hspace*{\axiomsIndent}$ \TakesMedicine \wedge \Bacteria=\Weak$ \hfill (A4)\\
\hspace*{\axiomsMedIndent}$\causesOneOf$\\
\hspace*{\axiomsBigIndent}$\{ (\{\Bacteria=\Absent, \Rash=\Absent\}, 7/10), $\\
\hspace*{\axiomsBigIndent}$(\{\Bacteria=\Resistant, \Rash=\Absent\}, 1/10), $\\
\hspace*{\axiomsBigIndent}$(\{\Bacteria=\Resistant\}, 2/10)\} $\\

\noindent
\hspace*{\axiomsIndent}$ \TakesMedicine \wedge \Bacteria=\Resistant$ \hfill (A5)\\
\hspace*{\axiomsMedIndent}$\causesOneOf$\\
\hspace*{\axiomsBigIndent}$\{ (\{\Bacteria=\Absent, \Rash=\Absent\}, 1/13), $\\
\hspace*{\axiomsBigIndent}$( \emptyset , 12/13)\} $\\

\noindent
\hspace*{\axiomsIndent}$ \TakesMedicine \performedAt 1$ \hfill (A6)\\

\noindent
\hspace*{\axiomsIndent}$ \TakesMedicine \performedAt 3$ \hfill (A7)\\

\noindent
where (A1) and (A2) are v-propositions, (A3) is an i-proposition, (A4) and (A5) are c-propositions, (A6) and (A7) are p-propositions.
\end{example}

\begin{example}\label{ex:keysdomain}
	Scenario \ref{scenario:keys} can be modeled using the following domain description $\mathcal{D}_K$:\\
	
	\noindent
	\hspace*{\axiomsIndent}$ \HasKeys \takesValues \{ \True, \False \} $\hfill(K1)\\
	
	\noindent
	\hspace*{\axiomsIndent}$ \LockedOut \takesValues \{ \True, \False \} $\hfill(K2)\\
	
	\noindent
	\hspace*{\axiomsIndent}$ \Location \takesValues \{ \Inside, \Outside \} $\hfill(K3)\\
	
	\noindent
	\hspace*{\axiomsIndent}$ \initiallyOneOf \{ (\{ \neg \HasKeys, \neg \LockedOut, \lit{\Location}{\Inside} \}, 1) \} $\hfill(K4)\\
	
	\noindent
	\hspace*{\axiomsIndent}$ \GoOut \wedge \neg \HasKeys \wedge \lit{\Location}{\Inside} $\hfill(K5)\\
	\hspace*{\axiomsMedIndent}$\causesOneOf$\\
	\hspace*{\axiomsBigIndent}$\{ (\{ \LockedOut, \lit{\Location}{\Outside} \}, 1) \} $\\
	
	\noindent
	\hspace*{\axiomsIndent}$ \GoOut \wedge \HasKeys \wedge \lit{\Location}{\Inside} $\hfill(K6)\\
	\hspace*{\axiomsMedIndent}$\causesOneOf \{ (\{ \lit{\Location}{\Outside} \}, 1) \} $\\
	
	\noindent
	\hspace*{\axiomsIndent}$ \PickupKeys \wedge \lit{\Location}{\Inside} $\hfill(K7)\\
	\hspace*{\axiomsMedIndent}$\causesOneOf \{ (\{ \HasKeys \}, 1) \} $\\
	
	\noindent
	\hspace*{\axiomsIndent}$ \PickupKeys \performedAt \text{7:30 AM} \withProb 0.99$\hfill(K8)\\	
	
	\noindent
	\hspace*{\axiomsIndent}$ \GoOut \performedAt \text{7:40 AM} $\hfill(K9)
\end{example}
Finally, we introduce h-propositions, whose role is that of being entailed by domain descriptions:
\begin{definition}[h-proposition]
	An \emph{h-proposition} has the form
	\begin{equation}\label{eq:hProposition}
	\varphi \holdsWithProb P.
	\end{equation}
	for some i-formula $\varphi$.
\end{definition}

For example, we will show in the following sections the formal sense in which $ \mathcal{D}_C $ entails the h-proposition ``$ \ifor{ \lit{\Coin}{\Heads} }{2} \holdsWithProb 0.51 $''.
\subsection{Semantics}
For the remainder of this paper, $ \mathcal{D} $ is an arbitrary domain description.

\begin{definition}[Worlds]\label{def:worlds} A \emph{world} is a function $W:\mathcal{I}\rightarrow \mathcal{S}$. The set of all worlds is denoted by $\mathcal{W}$.
\end{definition}

\begin{notation} In the following, we will use $W, W', W'', W_1, W_2, \dots$ to denote worlds.
\end{notation}

\begin{definition}[Satisfaction of an i-formula, Logical Consequence for i-formulas] Given a world $W$ and a literal $L$, $W$ \emph{satisfies an i-formula} \ifor{L}{I}, written $W \mmodels \ifor{L}{I}$, iff $L\in W(I)$\footnote{The symbols $\mmodels$ and $\notmmodels$ should not be confused with $\models$ and $\not\models$ which we use for the classical propositional entailment}. Otherwise we write $W\notmmodels \ifor{L}{I}$. The definition of $\mmodels$ is recursively extended for arbitrary i-formulas as follows: if $\varphi$ and $\varphi'$ are i-formulas, we write $W\mmodels \varphi \wedge \varphi'$ iff $W\mmodels \varphi$ and $W\mmodels '$, and $W\mmodels \neg \varphi$ iff $W\notmmodels \varphi$. $\vee$ and $\rightarrow$ are taken as shorthand in the usual way. Given a (possibly empty) set $\Delta$ of i-formulas, we write $W\mmodels \Delta$ iff $W \mmodels \psi$ for all $\psi \in \Delta$. Given an i-formula $\varphi$ and a set $\Delta$ of i-formulas we write $\Delta \mmodels \varphi$ if for all $W\in \mathcal{W}$ such that $W\mmodels \Delta$, $W\mmodels \varphi$ also holds. For two i-formulas $\psi$ and $\varphi$, we use $\psi \mmodels \varphi$ as a shorthand for $\{ \psi \} \mmodels \varphi$, and $\mmodels \varphi$ as a shorthand for $\emptyset \mmodels \varphi$.
\end{definition}

\begin{example}\label{ex:worlds}
	Three worlds for Scenario 1 can be specified as follows:\\
	
	\noindent
	\hspace*{\axiomsIndent}$W_1(0)= \{ \Coin=\Heads, \, \Toss=\False \}$,\\
	\hspace*{\axiomsIndent}$W_1(1)= \{ \Coin=\Heads, \, \Toss=\True \}$,\\
	\hspace*{\axiomsIndent}$W_1(I)= \{ \Coin=\Tails, \, \Toss=\False \}$ for all $I\geq2$.\\
	
	\noindent
	\hspace*{\axiomsIndent}$W_2(0)= \{ \Coin=\Tails, \, \Toss=\False \}$,\\
	\hspace*{\axiomsIndent}$W_2(1)= \{ \Coin=\Heads, \, \Toss=\False \}$,\\
	\hspace*{\axiomsIndent}$W_2(I)= \{ \Coin=\Tails, \, \Toss=\True \}$ for all $I\geq2$.\\
	
	\noindent
	\hspace*{\axiomsIndent}$W_3(0)= \{ \Coin=\Heads, \, \Toss=\False \}$,\\
	\hspace*{\axiomsIndent}$W_3(1)= \{ \Coin=\Heads, \, \Toss=\True \}$,\\
	\hspace*{\axiomsIndent}$W_3(I)= \{ \Coin=\Heads, \, \Toss=\False \}$ for all $I\geq2$.\\
	
	Intuitively, $W_1$ and $W_3$ match the domain description in Example \ref{ex:coindomain} as they represent a coherent history of what could have happened in Scenario \ref{scenario:coin}, whereas $W_2$ does not (e.g., changes occur when no action is performed, an infinite number of actions is being performed, etc\dots). This intuition will be made precise in what follows.
	
	Since worlds are functions from instants to states, they can conveniently be depicted as timelines as follows:\\
	
	\begin{center}
	\noindent\begin{tikzpicture}
	\draw[<-] (5.5,0) -- (-2,0) node[anchor=east] {$ W_1 $};
	\node at (5.6,-0.3) {$i$};
	
	\draw (-1,-0.1) -- (-1,0.1) node[anchor=south,text width=2.5cm,align=center] {\small$\{ \Coin=\Heads,$ $\Toss=\False \}$};
	\draw (1.5,-0.1) -- (1.5,0.1) node[anchor=south,text width=2.5cm,align=center] {\small$\{ \Coin=\Heads,$ $\Toss=\True \}$};
	\draw (4,-0.1) -- (4,0.1) node[anchor=south,text width=2.5cm,align=center] {\small$\{ \Coin=\Tails,$ $\Toss=\False \}$};

	\node at (-1,-0.3) {{\small {0}}};
	\node at (1.5,-0.3) {{\small {1}}};
	\node at (4,-0.3) {{\small {$\geq2$}}};
	\end{tikzpicture}\\
	
	\noindent\begin{tikzpicture}
	\draw[<-] (5.5,0) -- (-2,0) node[anchor=east] {$ W_2 $};
	\node at (5.6,-0.3) {$i$};
	
	\draw (-1,-0.1) -- (-1,0.1) node[anchor=south,text width=2.5cm,align=center] {\small$\{ \Coin=\Tails,$ $\Toss=\False \}$};
	\draw (1.5,-0.1) -- (1.5,0.1) node[anchor=south,text width=2.5cm,align=center] {\small$\{ \Coin=\Heads,$ $\Toss=\False \}$};
	\draw (4,-0.1) -- (4,0.1) node[anchor=south,text width=2.5cm,align=center] {\small$\{ \Coin=\Tails,$ $\Toss=\True \}$};
	
	\node at (-1,-0.3) {{\small {0}}};
	\node at (1.5,-0.3) {{\small {1}}};
	\node at (4,-0.3) {{\small {$\geq2$}}};
	\end{tikzpicture}\\
	
	\noindent\begin{tikzpicture}
	\draw[<-] (5.5,0) -- (-2,0) node[anchor=east] {$ W_3 $};
	\node at (5.6,-0.3) {$i$};
	
	\draw (-1,-0.1) -- (-1,0.1) node[anchor=south,text width=2.5cm,align=center] {\small$\{ \Coin=\Heads,$ $\Toss=\False \}$};
	\draw (1.5,-0.1) -- (1.5,0.1) node[anchor=south,text width=2.5cm,align=center] {\small$\{ \Coin=\Heads,$ $\Toss=\True \}$};
	\draw (4,-0.1) -- (4,0.1) node[anchor=south,text width=2.5cm,align=center] {\small$\{ \Coin=\Heads,$ $\Toss=\False \}$};
	
	\node at (-1,-0.3) {{\small {0}}};
	\node at (1.5,-0.3) {{\small {1}}};
	\node at (4,-0.3) {{\small {$\geq2$}}};
	\end{tikzpicture}
	\end{center}
\end{example}

\begin{definition}[Closed World Assumption for Actions]\label{def:cwa} A world $W$ is said to satisfy the \emph{closed world assumption for actions} (or \emph{CWA for actions}, for short) \emph{w.r.t. $\mathcal{D}$} if it satisfies the following condition: for all $A \in \mathcal{A}$ and $I\in \mathcal{I}$, if $W \mmodels \ifor{A}{I}$ then there exists some $P^+ \in (0,1]$ such that ``$A \performedAt I \withProb P^+$'' is in $ \mathcal{D} $. Furthermore, if for some $A \in \mathcal{A} $ and $I\in \mathcal{I}$ the p-proposition ``$A \performedAt I \withProb 1$'' is in $ \mathcal{D} $ then it must be the case that $W \mmodels \ifor{A}{I}$.
\end{definition}

\begin{example}\label{ex:cwa}
		Let $W_1$, $W_2$ and $W_3$ be the worlds in Example \ref{ex:worlds}, and let $\mathcal{D}_C$ be the domain description in Example \ref{ex:coindomain}. World $W_1$ satisfies CWA for actions w.r.t. $ \mathcal{D}_C $ as ${\Toss \in W_1 (I)}$ if and only if $I=1$, which is consistent with (C4) being the only p-proposition in $\mathcal{D}_C$. CWA is not satisfied by $W_2$ as $ \neg \Toss \in W_2 (1) $, i.e. $W_2 \mmodels \ifor{\neg \Toss}{1}$, but this is not consistent with (C4). $W_3$ satisfies CWA for actions for the same reason as $W_1$.
\end{example}

\begin{definition}[Cause Occurrence]\label{def:causeOccurrence} Let $\theta$ be the body of a c-proposition $c$ in a domain description $\mathcal{D}$ and $I\in \mathcal{I}$. If $W\mmodels \ifor{\theta}{I}$ then we say that that \emph{a cause occurs at instant $I$ in $W$ w.r.t. to $\mathcal{D}$}, and that \emph{the c-proposition $c$ is activated at $I$ in $W$ w.r.t. $\mathcal{D}$}. We write $\occ_{\mathcal{D}} (W)$ for the set $\{ I\in \mathcal{I} \mid \text{a cause occurs at $I$ in $W$} \}$. The function $cprop_\mathcal{D}$ with domain $\{ (W,I) \mid W\in \mathcal{W}, I \in occ_{\mathcal{D}}(W) \}$ is defined for instants $I$ in its domain as $cprop_\mathcal{D} (W,I)=c$ where $c$ is the (unique) c-proposition activated at $I$ in world $W$.
\end{definition}

\begin{example}
	Let $ \mathcal{D}_C $ be as in Example \ref{ex:coindomain} and $W_1$, $W_2$ and $W_3$ be as in Example \ref{ex:worlds}. Since $W_1 \mmodels \ifor{\Toss}{I} $ if and only if $I=1$ (and similarly for $W_3$), we derive that $\occ_{\mathcal{D}_C } (W_1) = \occ_{\mathcal{D}_C} (W_3) = \{ 1 \}$, with $\cprop_{\mathcal{D}_C} (W_1,1) = \cprop_{\mathcal{D}_C} (W_3,1) = \text{(C3)} $. For $W_2$, $\occ_{\mathcal{D}_C} (W_2)$ is defined as $\{ I \mid I\in\mathbb{N}, I\geq 2 \}$ with $\cprop_{\mathcal{D}_C} (W_2, I) = \text{(C3)}$ for $I\geq2$.
\end{example}

\begin{definition}[Initial Choice]\label{def:initialChoice}
	Let $\mathcal{D}$ be a domain description and the unique i-proposition in $\mathcal{D}$ be of the form \eqref{eq:iProposition}. Each $O_1,O_2,\dots,O_m$ is called an \emph{initial choice w.r.t. $\mathcal{D}$}.
\end{definition}

\begin{definition}[Effect Choice]\label{def:effectChoice}
	Let $W$ be a world and $\mathcal{D}$ a domain description. An \emph{effect choice for $W$} w.r.t. $\mathcal{D}$ is a function $\ec :  \occ_\mathcal{D} (W) \rightarrow \mathcal{O} $ such that for all instants $I \in \occ_\mathcal{D} (W)$, $\ec(I) \in \head(\cprop_\mathcal{D}(W,I)) $.
\end{definition}

\begin{example}\label{ex:icec}
	Let $ \mathcal{D}_C $ be as in Example \ref{ex:coindomain} and $W_1$, $W_2$ and $W_3$ be as in Example \ref{ex:worlds}. The only initial choice w.r.t. $ \mathcal{D}_C $ is $\ic_1 = (\{\Coin=\Heads\},1)$. The only effect choices for $W_1$ w.r.t. $\mathcal{D}_C$ are $\ec_1 (1) = (\{\Coin=\Tails\},49/100)$, $\ec_2 (1) = (\{\Coin=\Heads\},49/100)$ and $\ec_3 (1) = (\emptyset,2/100)$. Notice that since $occ_{\mathcal{D}_C}(W_1)=occ_{\mathcal{D}_C}(W_3)$, all the effect choices for $W_1$ are also effect choices for $W_3$.  There are an (uncountably) infinite number of effect choices for $W_2$ w.r.t. $\mathcal{D}_C$, each one mapping each instant $I\geq 2$ to $ (\{\Coin=\Heads\},49/100)$, $ (\{\Coin=\Tails\},49/100)$ or $ (\emptyset ,2/100)$. 
\end{example}

\begin{definition}[Initial Condition]\label{def:initialCondition}
	A world $W$ is said to \emph{satisfy the initial condition w.r.t. $\mathcal{D}$} if there exists an initial choice $\ic$ w.r.t. $\mathcal{D}$ such that $W(\initialInstant) \restriction \mathcal{F}= \chi (\ic) $. If a world $W$ satisfies the initial condition w.r.t. $\mathcal{D}$ for some initial choice $ \ic $, then we say that \emph{$W$ and $\ic$ are consistent with each other w.r.t. $ \mathcal{D} $}.
\end{definition}

\begin{example}\label{ex:ic}
	Let $ \mathcal{D}_C $ be as in Example \ref{ex:coindomain}, and $W_1$, $W_2$ and $W_3$ be as in \ref{ex:worlds}. Since $\ic_1 = (\{\Coin=\Heads\},1)$ is the only initial choice w.r.t. $\mathcal{D}_C$ as outlined in Example \ref{ex:icec}, $W_1$ and $W_3$ are consistent w.r.t. $\mathcal{D}_C$ with it, since $W_1 (0) \restriction \mathcal{F} = W_3 (0) \restriction \mathcal{F} = \chi (\ic_1)$. Therefore, $W_1$ and $W_3$ satisfy the initial condition w.r.t. $ \mathcal{D}_C $. Since $ W_2(0) \restriction \mathcal{F} \neq \chi(\ic_1) $, $W_2$ does not satisfy the initial condition.
\end{example}

\begin{definition}[Intervals] Given two instants $I$ and $I'$ such that $I \leq I'$, the intervals $[I,I']$, $[I,I')$, $(I,I']$ and $(I,I')$ are defined in the standard way w.r.t. the total order $\leq$. We also use $[I,+\infty)$ as shorthand for the set $\{ I' \mid I'\in \mathcal{I}, I'\geq I\}$, $(-\infty,I]$ as shorthand for $\{ I' \mid I'\in \mathcal{I}, I'\leq I\}$, $(I,+\infty)$ as a shorthand for $[I,\infty)\setminus \{ I \}$ and $(-\infty,I)$ as a shorthand for $(-\infty,I]\setminus \{ I \}$.
\end{definition}

\begin{definition}[Fluent State Update] Given a fluent state $\tilde{S}$ and a partial fluent state $\tilde{X}$, the \emph{update of $\tilde{S}$ w.r.t. $\tilde{X}$}, written $\update{\tilde{S}}{\tilde{X}}$, is the fluent state $(\tilde{S} \ominus \tilde{X}) \cup \tilde{X}$, where $\tilde{S} \ominus \tilde{X}$ is the partial fluent state formed by removing all fluent literals from $\tilde{S}$ of the form $F=V$ for some $F$ and $V'$ such that $F=V' \in \tilde{X}$. The operator $\oplus$ is left-associative, so e.g. $\tilde{S} \oplus \tilde{X} \oplus \tilde{X}'$ is understood as $(\update{(\update{\tilde{S}}{\tilde{X}})}{\tilde{X}'})$.
\end{definition}

\begin{definition}[Justified Change]\label{def:justifiedChange} A world $W$ is said to satisfy the \emph{justified change condition w.r.t. $\mathcal{D}$} if and only if there exists an effect choice $ \ec $ w.r.t. $\mathcal{D}$ such that for all instants $I$ and $I'$ with $I<I'$, $ \ec $ maps the possibly empty set of instants in $\occ_\mathcal{D} (W) \cap [I,I')=\{ I_1, \dots, I_n \}$ to $O_1, O_2, \dots, O_n$ respectively, where $I_1, \dots, I_n$ are ordered w.r.t. $\leq$, and
	\begin{equation}\label{eq:justifiedChange}
		W(I')\restriction \mathcal{F} = (W(I)\restriction \mathcal{F}) \oplus \chi(O_1) \oplus \chi(O_2) \oplus \dots \oplus \chi(O_n)
	\end{equation}
	If a world $W$ satisfies the justified change condition for some effect choice $\ec$, $W$ and $\ec$ are said to be \emph{consistent} with each other \emph{w.r.t. $\mathcal{D}$}.
\end{definition}

\begin{example}\label{ex:justifiedchange}
	Let $ \mathcal{D}_C $ be as in Example \ref{ex:coindomain}, $W_1$, $W_2$ be as in Example \ref{ex:worlds}, and $ec_1$ be defined as in Example \ref{ex:icec}.
	
	For any two instants $I$, $I'\in\mathbb{N}$ with $I<I'$, if $[I,I') \cap \occ_{\mathcal{D}_C} (W_1) = \emptyset $ then clearly $W_1(I')\restriction\mathcal{F} = W_1(I)\restriction \mathcal{F}$. Otherwise, if $[I,I') \cap \occ_{\mathcal{D}_C} (W_1) \neq \emptyset$, i.e. $[I,I') \cap \occ_{\mathcal{D}_C} (W_1) = \{ 1 \}$ then \eqref{eq:justifiedChange} holds as $W_1(I)\restriction\mathcal{F} \oplus \chi( ec_1 (1) ) = \{ \Coin=\Tails \} = W(I') \restriction \mathcal{F} $. So the justified change condition w.r.t. $ \mathcal{D}_C $ is satisfied by $W_1$.
	
	For $W$ to satisfy the justified change condition w.r.t. $ \mathcal{D}_C $, equation \eqref{eq:justifiedChange} would require $W_2(0) \restriction \mathcal{F} = W_2(1) \restriction \mathcal{F}$ (as $\occ_{\mathcal{D}_C} (W_2) \cap [1,2) = \emptyset$), but this is not the case. Hence, $W_2$ does not satisfy the justified change condition w.r.t. $ \mathcal{D}_C $.
\end{example}

\begin{definition}[Well-behaved Worlds] A world is said to be \emph{well-behaved w.r.t. $\mathcal{D}$} if it satisfies CWA for actions, the initial condition and the justified change condition w.r.t. $\mathcal{D}$. We denote the set of well-behaved worlds w.r.t. $ \mathcal{D} $ with $ \mathcal{W}_\mathcal{D} $.
\end{definition}

\begin{example}
	Let $ \mathcal{D}_C $ be as in Example \ref{ex:coindomain} and $W_1$, $W_2$ be as in Example \ref{ex:worlds}. $W_1$ is well-behaved as it satisfies CWA (see Example \ref{ex:cwa}), the initial condition (see Example \ref{ex:ic}) and the justified change condition (see Example \ref{ex:justifiedchange}) w.r.t. $ \mathcal{D}_C $. $W_2$ is not well-behaved as it fails to satisfy any of these conditions.
\end{example}




\begin{definition}[Candidate Trace, Trace]\label{def:trace}
	A \emph{candidate trace} is a function $\tr: \dom(\tr) \cup \{ \initialChoiceSymbol \} \rightarrow \mathcal{O} $ where $ \dom(\tr) \subseteq \mathcal{I} $ and $\initialChoiceSymbol$ is a new symbol such that $\initialChoiceSymbol \notin \mathcal{I}$. For readability, we will sometimes write $\langle \tr(\initialChoiceSymbol)@\initialChoiceSymbol, \tr(I_1)@I_1, \dots, \tr(I_m)@I_m \rangle$ where $\dom(\tr) = \{ I_1,\dots,I_m \}$ and the instants are ordered w.r.t. $\leq$.
	
	If $W$ is well-behaved w.r.t. $ \mathcal{D} $ and is consistent with the initial choice $\ic$ and the effect choice $\ec$ w.r.t. $ \mathcal{D} $, then $tr$ is said to be a \emph{trace of $W$ w.r.t. $ \mathcal{D} $} if $\dom(\tr) = \occ_\mathcal{D} (W)$ and $\tr(\initialChoiceSymbol) = \ic$ and for all $I\in \dom(\tr)$, $\tr(I) = \ec (I) $ and in this case we will sometimes write $\tr = (\ic,\ec)$.
	
	For any $W \in \mathcal{W}$, we write $TR_\mathcal{D}^W$ for the set of all traces of $W$ w.r.t. $ \mathcal{D} $, and notice that $TR^W_\mathcal{D} \neq \emptyset$ if and only if $W$ is well-behaved.
\end{definition}

A well-behaved world can have multiple traces, as shown in the following example.

\begin{example}\label{ex:trace}
Let $ \mathcal{D}_C $ be as in Example \ref{ex:coindomain}, $W_3$ be as in Example \ref{ex:worlds} and $\ic_1$, $\ec_2$, $\ec_3$ be as defined in Example \ref{ex:icec}. World $W_3$ has two distinct traces, $tr'_3=(ic_1,\ec_2)$ and $\tr_3''=(\ic_1,\ec_3)$, which disagree on the effect choice: in one case the robot manages to toss the coin producing $\Coin = \Heads$ as a result (i.e., $\tr_3'(1)=(\{\Coin=\Heads \}, 0.49)$) whereas in the other case the robot fails to grab the coin (i.e., $\tr_3''(1)=(\emptyset, 0.02)$) leaving $\Coin=\Heads$ to hold. These two traces are also the only traces of this world w.r.t. $ \mathcal{D_C} $.
\end{example}

However, for some candidate traces $ \tr $ there exists no well-behaved world $ W $ such that $ \tr $ is a trace of $ W $. We now generalise Definition \ref{def:trace} to domain descriptions:

\begin{definition}[Trace of a Domain Description]\label{def:traceOfADomainDescription} Given a candidate trace $\tr$, if there exists a well-behaved world $W$ w.r.t. $ \mathcal{D} $ such that $\tr$ is a trace of $ W $ w.r.t. $ \mathcal{D} $, then $tr$ is said to be \emph{a trace of $\mathcal{D}$}. 
\end{definition}

\begin{definition}[Evaluation of a Trace]\label{def:evaluationTrace} Let $tr$ be a candidate trace. The \emph{evaluation of $\tr$}, written $ \epsilon( \tr ) $, is defined as:
	\begin{equation}\label{eq:trace}
		\epsilon (\tr) = \pi( \tr(\initialChoiceSymbol) ) \cdot \prod_{ \mathclap{I\in \dom(\textit{tr}) }} \pi( \tr(I) )
	\end{equation}
\end{definition}

\begin{definition}[Evaluation of a Narrative] Given a p-proposition $ p $ of the form ``$ A \performedAt I \withProb P^+ $'', we define the \emph{evaluation of $p$ w.r.t. $ W $} as
\begin{equation}\label{eq:evalpproposition}
	\epsilon (p,W) = \begin{cases}
		P^+ & \text{if } W \mmodels \ifor{A}{I} \\
		1 - P^+ & \text{otherwise}
	\end{cases}
\end{equation}

For an action narrative $N$ (see Definition \ref{def:narrative}) we extend the previous definition to:
\begin{equation}
	\epsilon ( N, W ) = \prod_{\mathclap{p\in N}} \epsilon ( p, W ).
\end{equation}
and write $ \epsilon_\mathcal{D} (W) $ as a shorthand for $ \epsilon( \narr(\mathcal{D}),W ) $. Conventionally, $\epsilon(N,W)=1$ when $N=\emptyset$.
\end{definition}

\begin{definition}[$\lbrack 0,1 \rbrack$-interpretation] A \emph{[0,1]-interpretation} is a function from $\mathcal{W}$ to $[0,1]$.
\end{definition}

\begin{definition}[Model]\label{def:model} A \emph{model} of a domain description $\mathcal{D}$ is a $[0,1]$-interpretation $\model{\mathcal{D}}$ such that 
	\begin{enumerate}
		\item If $W\in\mathcal{W}$ is not well-behaved w.r.t. $ \mathcal{D} $,
		\begin{equation}
			\model{\mathcal{D}}{W} = 0,
		\end{equation}
		\item If $W\in\mathcal{W}$ is well-behaved w.r.t. $ \mathcal{D} $,
		\begin{equation}\label{eq:model}
			\model{\mathcal{D}}{W}= \epsilon_{\mathcal{D}} (W) \cdot \sum_{\mathclap{tr \in TR_\mathcal{D}^W} } \epsilon (\tr).
		\end{equation}
	\end{enumerate}
\end{definition}

\begin{example}\label{ex:modelofw3}
	Let $\mathcal{D}_C$ be as in Example \ref{ex:coindomain} and $W_3$ be as Example \ref{ex:worlds}. As discussed in Example \ref{ex:trace}, $W_3$ has exactly two traces $ \tr_3' = (ic_1,\ec_2) $ and $ \tr_{3}''= (\ic_1,\ec_3) $. Equations \eqref{eq:trace} and \eqref{eq:model} yield:
	\[ \model{\mathcal{D}_C}{3} = \epsilon ( \tr_{3}' ) + \epsilon ( \tr_{3}'' ) = 0.49 + 0.02 = 0.51 \]
\end{example}

\begin{proposition} A domain description $\mathcal{D}$ has a unique model.
	\begin{proof} This can be derived from Definition \ref{def:model} by considering that $\model{\mathcal{D}}{W}$ is calculated as a product of functions of the states of $W$.
	\end{proof}
\end{proposition}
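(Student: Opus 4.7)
The plan is to derive both existence and uniqueness directly from Definition~\ref{def:model}, which prescribes $\model{\mathcal{D}}{W}$ for every $W \in \mathcal{W}$ pointwise. Uniqueness is immediate: if $M$ and $M'$ both satisfy Definition~\ref{def:model}, then for each $W$ either $W$ is not well-behaved w.r.t.\ $\mathcal{D}$ (forcing $M(W) = M'(W) = 0$ by the first clause) or $W$ is well-behaved (forcing $M(W) = M'(W) = \epsilon_{\mathcal{D}}(W) \cdot \sum_{\tr \in TR_\mathcal{D}^W} \epsilon(\tr)$ by the second clause). Either way the value is a fixed function of $\mathcal{D}$ and $W$, so $M = M'$ on all of $\mathcal{W}$.

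For existence I would define $\model{\mathcal{D}}{W}$ by exactly those two cases and verify that it actually maps into $[0,1]$. The non-well-behaved case trivially gives $0$. For well-behaved $W$, the factor $\epsilon_{\mathcal{D}}(W)$ is a finite product of evaluations $\epsilon(p,W) \in [0,1]$, one per p-proposition in $\narr(\mathcal{D})$, and hence lies in $[0,1]$; and each $\epsilon(\tr)$ is a product of weights in $(0,1]$ drawn from the initial- and effect-choice outcomes. The sum $\sum_{\tr \in TR_\mathcal{D}^W} \epsilon(\tr)$ can be bounded by dropping the consistency requirement with $W$ and summing over all $(\ic,\ec)$ pairs such that $\ic$ is an initial choice and $\ec(I) \in \head(\cprop_{\mathcal{D}}(W,I))$ for each $I \in \occ_{\mathcal{D}}(W)$; since $W(\initialInstant) \restriction \mathcal{F}$ selects at most one consistent $\ic$ (the $\chi(O_i)$ are pairwise distinct by Definition~\ref{def:iProposition}), distributivity gives the bound
\[
\pi(\ic) \cdot \prod_{I \in \occ_{\mathcal{D}}(W)} \sum_{O \in \head(\cprop_{\mathcal{D}}(W,I))} \pi(O) \;=\; \pi(\ic) \cdot 1 \;\leq\; 1,
\]
using $\weight(\head(c)) = 1$ for every c-proposition.

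The only subtlety I anticipate is when $\occ_{\mathcal{D}}(W)$ is infinite: both the product inside each $\epsilon(\tr)$ and the outer sum over traces then become infinitary. This is not a real obstacle because every factor lies in $(0,1]$, so each product converges monotonically in $[0,1]$, and the distributivity bound carries over by restricting to finite subsets of $\occ_{\mathcal{D}}(W)$ and passing to the supremum. The core uniqueness claim, which is what the proposition really turns on, requires none of this machinery and follows from the pointwise prescription alone.
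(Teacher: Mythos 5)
Your proposal is correct and takes essentially the same route as the paper's (much terser) proof: Definition \ref{def:model} prescribes $M_{\mathcal{D}}(W)$ pointwise for every world, so uniqueness is immediate, and your extra verification that the prescribed values lie in $[0,1]$ merely makes explicit what the paper leaves implicit. One small remark: the infinitary subtlety you guard against never actually arises, because CWA for actions together with the finiteness of $\mathcal{D}$ (finitely many p-propositions, and every c-proposition body Herbrand-entails some $\lit{A}{\True}$) forces $\occ_{\mathcal{D}}(W)$ to be finite for every well-behaved $W$, so all products and the sum over $TR_{\mathcal{D}}^{W}$ are finite.
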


\begin{definition}\label{def:Mstar} We extend the model $\model{\mathcal{D}}$ to a function $\modelStar{\mathcal{D}}: \Phi \rightarrow [0,1]$ over i-formulas in the following way: \[ \modelStar{\mathcal{D}}{\varphi} = \sum_{\mathclap{W \mmodels \varphi}} \model{\mathcal{D}}{W}.\]
\end{definition}

\begin{definition}[Entailment for Domain Descriptions] Given a domain description $\mathcal{D}$ and an i-formula $ \varphi $, we say that \emph{the h-proposition ``$\varphi \holdsWithProb P$'' is entailed by $\mathcal{D}$} iff $\modelStar{\mathcal{D}}{\varphi}=P$.
\end{definition}

\begin{example}
	For $\mathcal{D}_C$ as in Example \ref{ex:coindomain}, the only well-behaved world $W$ such that $W \mmodels \ifor{\lit{\Coin}{\Heads}}{2} $ is $W_3$. Definition \ref{def:Mstar} and Example \ref{ex:modelofw3} yield
    \[ \modelStar{\mathcal{D}_C}{\ifor{\lit{\Coin}{\Heads}}{2}} = \model{\mathcal{D}_C}{W_3} = 0.51. \]
The reader can verify that $\ifor{\lit{\Coin}{\Heads}}{0} $ yields
	\[ \modelStar{\mathcal{D}_C}{\ifor{\lit{\Coin}{\Heads}}{0}} = \model{\mathcal{D}_C}{W_1} + \model{\mathcal{D}_C}{W_3} = 1 \]

\noindent
and from this we can derive that $ \mathcal{D}_C $ entails the two following h-propositions:\\
	
	\noindent
	\hspace*{\axiomsIndent}$\ifor{\lit{\Coin}{\Heads}}{2} \holdsWithProb 0.51$,\\
	
	\noindent
	\hspace*{\axiomsIndent}$\ifor{\lit{\Coin}{\Heads}}{0} \holdsWithProb 1$.\\
\end{example}

\begin{example}
	Let $ \mathcal{D}_K $ be as in Example \ref{ex:keysdomain}. Assuming $\initialInstant = \text{7:30 AM}$, the only two well-behaved worlds w.r.t. $ \mathcal{D}_K $ are:
	\begin{center}
	\noindent\begin{tikzpicture}
	\draw[<-] (9,0) -- (-2,0) node[anchor=east] {$ W_1 $};
	\node at (9,-0.3) {$i$};
	
	\draw (0,-0.1) -- (0,0.1) node[anchor=south,text width=3cm,align=center] {\small$\{ \neg\HasKeys,$ $\neg\LockedOut,$ $ \lit{\Location}{\Inside},$ $\PickupKeys,$ $\neg\GoOut \}$};
	\draw (3.5,-0.1) -- (3.5,0.1) node[anchor=south,text width=3cm,align=center] {\small$\{ \HasKeys,$ $\neg\LockedOut,$ $ \lit{\Location}{\Inside},$ $\neg\PickupKeys,$ $\GoOut \}$};
	\draw (7,-0.1) -- (7,0.1) node[anchor=south,text width=3cm,align=center] {\small$\{ \HasKeys,$ $\neg\LockedOut,$ $ \lit{\Location}{\Outside},$ $\neg\PickupKeys,$ $\neg\GoOut \}$};
	
	\node at (0,-0.3) {{\small {7:30 AM}}};
	\node at (3.5,-0.3) {{\small {7:40 AM}}};
		\node at (7,-0.3) {{\small {$>$7:40 AM}}};
	\end{tikzpicture}
	\vspace{1em}
	
	\noindent\begin{tikzpicture}
		\draw[<-] (9,0) -- (-2,0) node[anchor=east] {$ W_2 $};
		\node at (9,-0.3) {$i$};
		
		\draw (0,-0.1) -- (0,0.1) node[anchor=south,text width=3cm,align=center] {\small$\{ \neg\HasKeys,$ $\neg\LockedOut,$ $ \lit{\Location}{\Inside},$ $\neg\PickupKeys,$ $\neg\GoOut \}$};
		\draw (3.5,-0.1) -- (3.5,0.1) node[anchor=south,text width=3cm,align=center] {\small$\{ \neg\HasKeys,$ $\neg\LockedOut,$ $ \lit{\Location}{\Inside},$ $\neg\PickupKeys,$ $\GoOut \}$};
		\draw (7,-0.1) -- (7,0.1) node[anchor=south,text width=3cm,align=center] {\small$\{ \neg\HasKeys,$ $\LockedOut,$ $ \lit{\Location}{\Outside},$ $\neg\PickupKeys,$ $\neg\GoOut \}$};
		
		\node at (0,-0.3) {{\small {7:30 AM}}};
		\node at (3.5,-0.3) {{\small {7:40 AM}}};
		\node at (7,-0.3) {{\small {$>$7:40 AM}}};
	\end{tikzpicture}
	\end{center}

	Since both actions $ \GoOut $ and $ \PickupKeys $ have definite effects (i.e., outcomes in the head of the corresponding c-propositions have probability equal to $ 1 $), the only significant factors in the calculation of $ \model{\mathcal{D}_K} $ are those given by the evaluation of the action narrative:
	
	\[ \model{\mathcal{D}_K}{W_1} = \epsilon_{\mathcal{D}_K} (W_1) = 0.99 \]
	
	\[ \model{\mathcal{D}_K}{W_2} = \epsilon_{\mathcal{D}_K} (W_2) = 0.01 \]
	implying that $ \mathcal{D}_K $ entails the following propositions:\\
	
	\noindent
	\hspace*{\axiomsIndent}$\ifor{\LockedOut}{\text{9AM}} \holdsWithProb 0.01$,\\

	\noindent	
	\hspace*{\axiomsIndent}$\ifor{\HasKeys}{\text{9AM}} \holdsWithProb 0.99$.
\end{example}

\subsection{Properties of a model}

We now introduce the concept of a \emph{probability function}, adapted from \cite{paris2006uncertain}:

\begin{definition}[Probability Function, Conditional Probability]\label{def:probabilityFunction} A \emph{probability function (over i-formulas)} is a function $p:\Phi \rightarrow [0,1]$ such that:
	\begin{enumerate}
		\item \label{item:probabilityFunction1} if $\mmodels \varphi$, then $p(\varphi)=1$, 
		\item \label{item:probabilityFunction2} if $\varphi \mmodels \neg \psi$ for two i-formulas $\varphi$ and $\psi$, then $p(\varphi\vee \psi)=p(\varphi)+p(\psi)$.
	\end{enumerate}
	The associated \emph{conditional probability} of $\varphi$ given $\psi$ is defined as
	\begin{equation}\label{eq:conditionalprobability}
	p(\varphi \mid \psi)=\frac{p(\varphi\wedge\psi)}{p(\psi)}
	\end{equation}
	for $p(\psi)\neq 0$.
\end{definition}



We will show that $\modelStar{\mathcal{D}}$ is a probability function. To prove this, first we need to introduce some auxiliary definitions:

\begin{definition}[Restricted Domain Description]\label{def:restricteddomaindescription} If $\mathcal{D}$ is a domain description, we denote by $ \mathcal{D}_{\leq I} $ the domain description obtained from $\mathcal{D}$ by removing all the p-propositions occurring at instants $ > I $, and similarly we denote by $ \mathcal{D}_{< I}$ the domain description obtained from $\mathcal{D}$ by removing all the p-propositions occurring at instants $ \geq I $. Finally, we denote by $\mathcal{D}_{\emptyset}$ the domain description obtained from $\mathcal{D}$ by removing all p-propositions, i.e. $\mathcal{D}_{<\initialInstant}$.
\end{definition}

\begin{definition}[Fluent-indistinguishability, Indistinguishability]\label{def:fluentindistinguishability} A world $W$ is said to be \emph{fluent-indistinguishable from $W'$ up to an instant $I$} if and only if $W(I') \restriction \mathcal{F} =W'(I') \restriction \mathcal{F}$ for all instants $I'$ such that $I' \leq I$. $W$ is said to be \emph{indistinguishable from $W'$ up to an instant $I$} if and only if it is fluent indistinguishable from $ W' $ up to $ I $ and if for all $I'<I$ it also satisfies $ A\in W(I') $ if and only if $ A\in W'(I') $.
\end{definition}

In the following example, we illustrate the two concepts of restricted domain description and indistinguishability:

\begin{example}\label{ex:restrictionandfluentindistinguishability} Let $ \mathcal{D}' $ be the domain description obtained from $ \mathcal{D}_C $ as in Example \ref{ex:coindomain} by adding the following p-proposition:\\

\noindent
\hspace*{\axiomsIndent}$ \Toss \performedAt 2$\hfill(C5)\\

\noindent
and consider the following well-behaved world w.r.t. $ \mathcal{D}' $:\\

\noindent
\hspace*{\axiomsIndent}$W'(0) = \{ \Coin = \Heads, \neg \Toss \}$,\\
\hspace*{\axiomsIndent}$W'(1) = W'(2) = \{ \Coin = \Heads, \Toss \}$,\\
\hspace*{\axiomsIndent}$W'(I) =\{ \Coin = \Tails, \neg \Toss \}$ for all $I > 2$\\

\noindent
$W'$ has exactly two traces $tr' = \langle (\{\Coin=\Heads\},1)@\initialChoiceSymbol, (\{\Coin=\Heads\},0.49)@1, (\{\Coin=\Tails\},0.49)@2 \rangle$ and $tr''= \langle (\{\Coin=\Heads\},1)@\initialChoiceSymbol, (\emptyset,0.02)@1, (\{\Coin=\Heads\},0.49)@2 \rangle$.

Consider $\mathcal{D}'_{<2} $ and notice that it coincides with $\mathcal{D}_C$ as in the previous examples. There is a unique well-behaved world w.r.t. $\mathcal{D}_C$ that is indistinguishable from $W'$ up to $2$, and this world is $W_3$ as in Example \ref{ex:worlds}.
\end{example}

\begin{definition}[Transition Set, Transition Function]\label{def:transition} Given a domain description $ \mathcal{D} $, a state $ S $ and a fluent state $ \fluentState{S}' $, the \emph{transition set} $\tset_{\mathcal{D}}( S, \fluentState{S}' )$ is defined as follows: if $ \mathcal{D} $ contains a (unique) c-proposition $c$ such that $S$ Herbrand-entails $\body(c)$, then $\tset_{\mathcal{D}}( S, \fluentState{S}' ) = \{ O \in \head(c) \mid (S\restriction\mathcal{F}) \oplus \chi(O)=\fluentState{S}' \}$ if there is no such c-proposition and $S\restriction\mathcal{F} = \fluentState{S}'$ then $\tset( S, \fluentState{S}' ) = \{ (\emptyset,1) \}$; otherwise, $\tset_{\mathcal{D}}( S, \fluentState{S}' ) = \emptyset$.
	
The \emph{transition function} for a domain description $ \mathcal{D} $ is the function $ t_\mathcal{D} : \mathcal{S} \times \fluentState{\mathcal{S}} \rightarrow [0,1] $ defined by $t_\mathcal{D} (S,\fluentState{S}') = \weight(\tset_\mathcal{D}(S,\fluentState{S}')) $ (recall Definition \ref{def:weightOfASetOfOutcomes} for the meaning of $ \weight $ in this case).
\end{definition}

Informally, the transition function gives the probability of moving from state $ S $ to the fluent state $ \fluentState{S}' $ within $ \mathcal{D} $, independently of its particular narrative.

The transition function for the coin toss example can be visualised as in Figure \ref{fig:coinTransition}, where the nodes represent fluent states (in this case we have two nodes $H$ and $T$ standing for the fluent states $\{ \Coin=\Heads \}$ and $\{ \Coin=\Tails \}$ respectively), and if $p=t_\mathcal{D} (S,\tilde{S}')$ for some state $S$ and some fluent state $ \fluentState{S}' $, then there is an arrow from a node representing $S \restriction \mathcal{F} $ to a node representing $\tilde{S}'$ which is labelled $S\restriction\mathcal{A}, p$. The arrow is omitted in some trivial cases (for instance when the set of actions is empty).

\begin{figure}
\begin{center}
	\begin{tikzpicture}[>=latex',shorten >=2pt,node distance=6cm,on grid,auto]
	
	\node[state] (H) {$H$};
	\node[state] (T) [right=of H] {$T$};
	\path[->] (H) edge [loop above,looseness=6] node [anchor=south] {$\{\Toss\}$, 0.51} (H);
	\path[->] (T) edge [loop above,looseness=6] node [anchor=south] {$\{\Toss\}$, 0.51} (T);
	\path[->] (H) edge [bend right=35] node [anchor=north] {$\{ \Toss \}$, 0.49} (T);
	\path[->] (T) edge [bend right=35] node [anchor=south] {$\{ \Toss \}$, 0.49} (H);
	\end{tikzpicture}
\end{center}
\caption[prova]{Transition function for the Coin Toss domain.}\label{fig:coinTransition}
\end{figure}
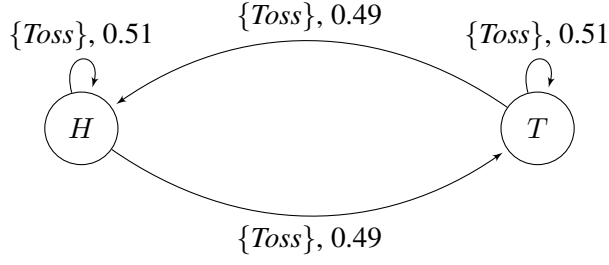

Similarly, the transition function for the antibiotic domain can be pictured as in Figure \ref{fig:antibioticTransition}.

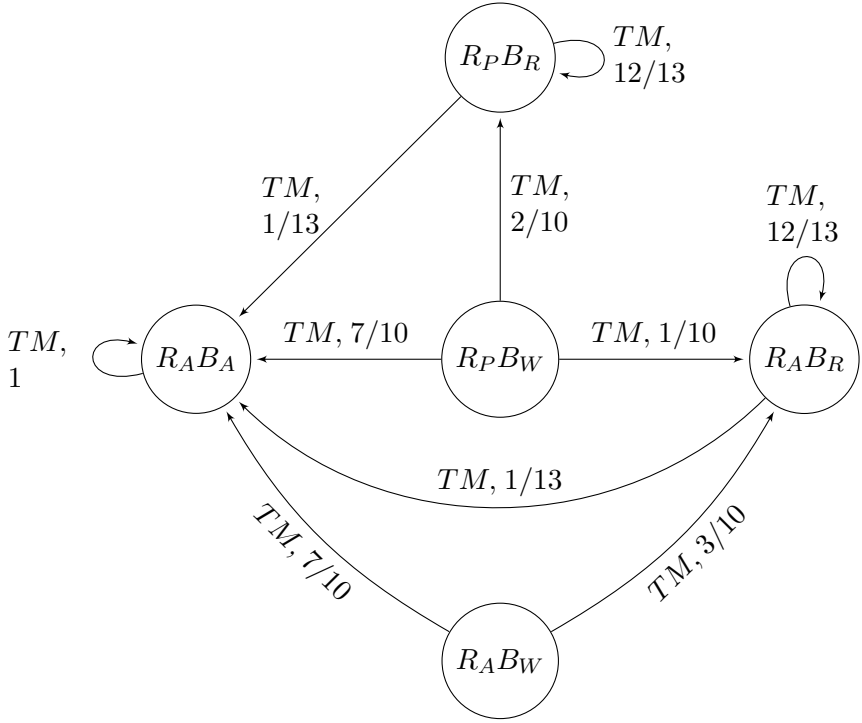
\begin{figure}
	\begin{tikzpicture}[>=latex',shorten >=2pt,node distance=4cm,on grid,auto]

	\node[state] (RPBR) {$R_P B_R$};
	\node[state] (RPBW) [below=of RPBR] {$R_P B_W$};
	\node[state] (RABA) [left=of RPBW] {$R_A B_A$};
	\node[state] (RABR) [right=of RPBW] {$R_A B_R$};
	\node[state] (RABW) [below=of RPBW] {$R_A B_W$};
	\path[->] (RPBR) edge [loop right,looseness=6,text width=1cm] node [anchor=west] {$TM,$ $12/13$} (RPBR);
	\path[->] (RABR) edge [loop above,looseness=6,text width=1cm] node [anchor=south] {$TM,$ $12/13$} (RABR);
	\path[->] (RABA) edge [loop left,looseness=6,text width=1cm] node [anchor=east] {$TM,$ $1$} (RABA);
	\path[->] (RPBR) edge node [anchor=east,text width=1cm] {$TM,$ $1/13$} (RABA);
	\path[->] (RPBW) edge node [anchor=south] {$TM,$ $7/10$} (RABA);
	\path[->] (RPBW) edge node [anchor=south] {$TM,$ $1/10$} (RABR);
	\path[->] (RPBW) edge node [anchor=west,text width=1cm] {$TM,$ $2/10$} (RPBR);
	\path[->] (RABR) edge [bend left=45] node [anchor=south] {$TM,$ $1/13$} (RABA);
	\path[->] (RABW) edge [bend left=15] node [anchor=north, rotate=-45] {$TM,$ $7/10$} (RABA);
	\path[->] (RABW) edge [bend right=15] node [anchor=north, rotate=45] {$TM,$ $3/10$} (RABR);
	\end{tikzpicture}
	\caption{Transition function for the Antibiotic domain.}\label{fig:antibioticTransition}
\end{figure}
where $R_P B_R$ is the fluent state $\{ \Rash = \Present, \Bacteria=\Resistant \}$, $R_P B_W$ is the fluent state $\{ \Rash = \Present, \Bacteria=\Weak \}$, $ R_A B_A $ is the fluent state $\{ \Rash = \Absent, \Bacteria=\Absent \}$, $ R_A B_R $ is the fluent state $\{ \Rash = \Absent, \Bacteria=\Resistant \}$, and $TM=\{ \TakesMedicine \}$.

The transition function can be conveniently used to express $\model{\mathcal{D}}{W}$ in terms of the model of a well-behaved world w.r.t. an appropriately restricted domain description:

\begin{proposition}\label{prop:decomposingw}
Let $\mathcal{D}$ be an arbitrary domain description and $W$ be a world such that $\occ_\mathcal{D} (W) = \{ I_1, \dots, I_n \} \neq \emptyset$ where $I_1, \dots, I_n$ are ordered w.r.t. $\leq$, and let $c$ be the c-proposition activated in $W$ at $I_{n}$ w.r.t. $ \mathcal{D} $. Then $W$ is well-behaved w.r.t. $ \mathcal{D} $ if and only if (i) there exists a unique world $W'$ well-behaved w.r.t. $\mathcal{D}_{<I_n}$ which is indistinguishable from $W$ up to $I_n$, (ii) for all $I > I_n$, $W(I)\restriction\mathcal{F} = \fluentState{S}^{W}_{>I_n}$ where $ \fluentState{S}^{W}_{>I_n} = (W(I_n)\restriction\mathcal{F}) \oplus \chi( O ) $ for some outcome $O \in \head(c)$ , and (iii) $W$ satisfies CWA for actions w.r.t. $ \mathcal{D} $.
	
	Furthermore, for $\fluentState{S}^{W}_{>I_n}$ the unique fluent state taken by $W$ at instants $I>I_n$:
\begin{equation}\label{eq:recursiveModel}
	\model{\mathcal{D}}{W} = \frac{\epsilon_\mathcal{D} ( W )}{\epsilon_{\mathcal{D}_{<I_n}} ( W' )} \cdot \model{\mathcal{D}_{<I_n}}{W'} \cdot t_\mathcal{D} ( W(I_n), \fluentState{S}^{W}_{>I_n} )
\end{equation}
\end{proposition}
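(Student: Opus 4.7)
The plan is to first establish the biconditional characterising well-behavedness of $W$, and then derive equation \eqref{eq:recursiveModel} by decomposing the sum over traces through a bijection on the set of traces.

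For the ``only if'' direction, I would construct $W'$ explicitly by setting $W'(I) = W(I)$ for $I < I_n$; making $W'(I_n)$ agree with $W(I_n)$ on fluents but have the action-part prescribed by CWA w.r.t.\ $\mathcal{D}_{<I_n}$ (so no actions hold at $I_n$, as $\mathcal{D}_{<I_n}$ has no p-propositions with instant $\geq I_n$); and letting $W'$ remain constant in fluents with no action occurrences for all $I > I_n$. I would then verify that $W'$ is well-behaved w.r.t.\ $\mathcal{D}_{<I_n}$: CWA holds by construction, the initial condition transfers from $W$ via fluent-indistinguishability at $\initialInstant$, and justified change holds because $W'$ inherits the behaviour of $W$ on $[\initialInstant, I_n]$ while $\occ_{\mathcal{D}_{<I_n}}(W') = \{I_1, \dots, I_{n-1}\}$. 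The last fact crucially relies on the constraint that the body of any c-proposition Herbrand-entails some action literal $A = \True$, so no cause can occur in $W'$ beyond $I_{n-1}$. Conditions (ii) and (iii) are then direct consequences of $W$ being well-behaved. For the ``if'' direction, CWA is exactly assumption (iii), the initial condition is inherited from $W'$, and justified change for $W$ is witnessed by the effect choice of $W'$ extended at $I_n$ by an outcome $O \in \head(c)$ whose application via $\oplus$ realises the transition required by (ii). Uniqueness of $W'$ follows since indistinguishability up to $I_n$ pins $W'$ down there, and well-behavedness w.r.t.\ $\mathcal{D}_{<I_n}$ combined with the absence of causes in $W'$ past $I_n$ pins down the remainder.

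For the equation, I would establish a bijection between $TR^W_\mathcal{D}$ and pairs $(\tr', O) \in TR^{W'}_{\mathcal{D}_{<I_n}} \times \tset_\mathcal{D}(W(I_n), \fluentState{S}^W_{>I_n})$, with $\tr$ corresponding to the extension of $\tr'$ defined by $\tr(I_n) = O$. The image lies in $\tset_\mathcal{D}(\cdot,\cdot)$ precisely because the effect choice of $W$ at $I_n$ must realise the fluent-state transition prescribed by (ii). Using the multiplicative form of $\epsilon(\tr)$ and Definition~\ref{def:transition}:
\begin{equation*}
\sum_{\tr \in TR^W_\mathcal{D}} \epsilon(\tr) \;=\; \Bigl(\sum_{\tr' \in TR^{W'}_{\mathcal{D}_{<I_n}}} \epsilon(\tr')\Bigr) \cdot t_\mathcal{D}(W(I_n), \fluentState{S}^W_{>I_n}).
\end{equation*}
Multiplying through by $\epsilon_\mathcal{D}(W)$, inserting $\epsilon_{\mathcal{D}_{<I_n}}(W')/\epsilon_{\mathcal{D}_{<I_n}}(W')$, and observing that the factors of $\epsilon_\mathcal{D}(W)$ and $\epsilon_{\mathcal{D}_{<I_n}}(W')$ coming from p-propositions at instants $< I_n$ coincide (because $W$ and $W'$ agree on actions strictly before $I_n$) yields equation \eqref{eq:recursiveModel}.

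The main obstacle will be verifying this bijection rigorously: I need to confirm that $\occ_{\mathcal{D}_{<I_n}}(W') = \occ_\mathcal{D}(W) \setminus \{I_n\}$, that every effect choice of $W$ restricts to a legitimate effect choice of $W'$ paired with an outcome at $I_n$ in the appropriate $\tset_\mathcal{D}$, and conversely that every such extension produces a valid effect choice of $W$ realising the correct fluent transition. The last step relies on the interplay between the $\oplus$ operator, the definition of $\tset_\mathcal{D}$, and the distinctness constraint $\chi(O_i) \neq \chi(O_j)$ on outcomes within $\head(c)$, which is what guarantees that the outcome at $I_n$ is uniquely identifiable from the post-state $\fluentState{S}^W_{>I_n}$ and that the decomposition is well-defined.
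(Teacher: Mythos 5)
Your proposal is correct and follows essentially the same route as the paper: build the unique $W'$ well-behaved w.r.t.\ $\mathcal{D}_{<I_n}$ that agrees with $W$ up to $I_n$ (with all actions forced false from $I_n$ on), and obtain \eqref{eq:recursiveModel} via the correspondence $\tr \leftrightarrow (\tr',O)$ with $O\in\tset_\mathcal{D}(W(I_n),\fluentState{S}^{W}_{>I_n})$, which factors $\sum_{\tr\in TR^W_\mathcal{D}}\epsilon(\tr)$ into $\bigl(\sum_{\tr'\in TR^{W'}_{\mathcal{D}_{<I_n}}}\epsilon(\tr')\bigr)\cdot t_\mathcal{D}(W(I_n),\fluentState{S}^{W}_{>I_n})$, exactly as in the paper's ``Furthermore'' step. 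One caveat: your closing remark that the constraint $\chi(O_i)\neq\chi(O_j)$ makes the outcome at $I_n$ uniquely recoverable from the post-state $\fluentState{S}^{W}_{>I_n}$ is false (in the coin domain both $(\{\lit{\Coin}{\Heads}\},0.49)$ and $(\emptyset,0.02)$ yield the same post-state, which is precisely why $W_3$ has two traces), but your argument never actually needs it, since the factorisation sums $\pi$ over the whole transition set $\tset_\mathcal{D}(W(I_n),\fluentState{S}^{W}_{>I_n})$ rather than picking a single outcome.
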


\begin{proof} \textbf{``Only if'' subproof.} Let $W$ be well-behaved w.r.t. $\mathcal{D}$. Let $\tr = \langle \tr(\initialChoiceSymbol)@\initialChoiceSymbol, \tr(I_1)@I_1, \dots, \tr(I_n)@I_n \rangle$ be an arbitrary trace of $W$ w.r.t. $\mathcal{D}$ and consider the candidate trace $\tr' = \langle \tr(\initialChoiceSymbol)@\initialChoiceSymbol, \tr(I_1)@I_1, \dots, \tr(I_{n-1})@I_{n-1} \rangle$.

Since W is well-behaved w.r.t. $ \mathcal{D} $, since $\mathcal{D}$ and $\mathcal{D}_{< I_n}$ differ only by one p-proposition occurring at $I_{n}$, and since $\tr'$ does not mention any instant strictly greater than $I_{n-1}$, it is possible to construct a world $W'$ which has trace $\tr'$ w.r.t. $\mathcal{D}_{< I_n}$ and which is fluent-indistinguishable from $W$ up to instant $I_{n}$ by simply considering that $W'(\initialInstant) = \chi(\tr'(\initialChoiceSymbol)) = W(\initialInstant)$ makes the Initial Condition satisfied w.r.t. $ \mathcal{D}_{< I_n} $ as both $ \mathcal{D} $ and $ \mathcal{D}_{< I_n} $ share the same i-proposition, and a similar argument applies to the Justified Change Condition w.r.t. $ \mathcal{D}_{< I_n} $. Well-behavedness w.r.t. $ \mathcal{D} $ and $ \mathcal{D}_{< I_n} $ guarantees that for all instants $I \leq I_n$, $W(I)\restriction \mathcal{F} = \tr(\initialChoiceSymbol) \oplus \dots \oplus \tr(I_i) = \tr'(\initialChoiceSymbol) \oplus \dots \oplus \tr'(I_i) = W'(I)\restriction \mathcal{F} $ for some $i<n$, hence $W$ is fluent-indistinguishable from $ W' $ up to $I_n$. Such a $W'$ might not be unique, but if we choose $W'$ as to satisfy $A \in W(I) \Leftrightarrow A\in W'(I)$ for all $I<I_n$ then the uniqueness of $W'$ is guaranteed. Then, (i) holds. Since $ W $ is well-behaved w.r.t. $ \mathcal{D} $ and $I_n$ is the greatest element in $\occ_\mathcal{D}$, $\chi(\tr(I_n)) = O$ for some $O \in \head(c)$ and Justified Change implies $W(I) \restriction \mathcal{F} = (W(I_n) \restriction \mathcal{F}) \oplus \chi( \tr(I_n) ) $ for all $I > I_n$, and if we let $\fluentState{S}^{W}_{>I_n}$ be such unique fluent state (ii) is also satisfied. Finally, (iii) holds by definition of well-behavedness w.r.t. $ \mathcal{D} $.

\textbf{``If'' subproof.} Let $ W' $ be a well-behaved world w.r.t. $ \mathcal{D}_{<I_n} $ and let $\occ_{\mathcal{D}_{<I_n}} (W') = \{ I_1, \dots, I_{n-1} \}$. Let $\tr' = \langle \tr(\initialChoiceSymbol)@\initialChoiceSymbol, \tr(I_1)@I_1, \dots, \tr(I_{n-1})@I_{n-1} \rangle$ be a trace of $W'$ w.r.t. $ \mathcal{D}_{I_n} $ and construct the candidate trace $\tr = \langle \tr(\initialChoiceSymbol)@\initialChoiceSymbol, \tr(I_1)@I_1, \dots, O@I_n \rangle$ for the outcome $O \in \head(c)$ such that $(W(I)\restriction\mathcal{F}) = (W(I_n)\restriction\mathcal{F}) \oplus \chi(O)$ for all $I>I_n$. Since $W'$ is well-behaved w.r.t. $ \mathcal{D}_{<I_n} $ and indistinguishable from $W$ up to $I_n$ by hypothesis (i), we derive that $tr$ is a trace of $W'$ w.r.t. $ \mathcal{D} $ by noticing again that both $ \mathcal{D} $ and $ \mathcal{D}_{< I_n} $ share the same i-proposition, and a similarly arguments applies for the Justified Change Condition w.r.t. $ \mathcal{D} $ (also using hypothesis (ii)). Since $W$ also satisfies CWA for actions by hypothesis (iii), it is well-behaved.

\textbf{``Furthermore'' subproof.} Let $\fluentState{S}^{W}_{>I_n}$ and $c$ be as in the statement of the proposition. The above proof implies that for any trace $\tr$ of $W$ w.r.t. $ \mathcal{D} $ this trace can be constructed from a trace $\tr'$ of $ W' $ by letting $\tr(I_i) = \tr'(I_i)$ for $i<I_n$ and $\tr(I_n) = \chi(O)$ for some $O \in \head(c)$ such that $\fluentState{S}^{W}_{>I_n} = (W(I_{n}) \restriction \mathcal{F}) \oplus \chi(O) $ (and notice that there is at least such an outcome $O$ since $W$ is well-behaved), i.e. for some $O \in \tset_{\mathcal{D}} ( W(I_n), \fluentState{S}^{W}_{>I_n} )$.

Definition \ref{def:model} now implies

\begin{align*}
	\model{\mathcal{D}}{W} &= \epsilon_{\mathcal{D}} (W) \cdot \sum_{\mathclap{tr \in TR_\mathcal{D}^W} } \epsilon (\tr) \\
		&= \epsilon_{\mathcal{D}_{< I_n}}(W') \cdot \frac{\epsilon_{\mathcal{D}}(W)}{\epsilon_{\mathcal{D}_{< I_n}}(W')} \cdot \pi( \tset_{\mathcal{D}} ( W(I_n), \fluentState{S}^{W}_{>I_n} ) ) \cdot \sum_{\mathclap{ tr' \in TR^{W'}_{\mathcal{D}_{<I_n}} }} \epsilon (\tr')\\
		&= \frac{\epsilon_{\mathcal{D}}(W)}{\epsilon_{\mathcal{D}_{< I_n}}(W')} \cdot t_\mathcal{D} ( W(I_n), \fluentState{S}^{W}_{>I_n} ) \cdot \left( \epsilon_{\mathcal{D}_{< I_n}}(W') \cdot \sum_{\mathclap{ tr' \in TR^{W'}_{\mathcal{D}_{<I_n}} }} \epsilon (\tr') \right)\\
		&= \frac{\epsilon_\mathcal{D} ( W )}{\epsilon_{\mathcal{D}_{<I_n}} ( W' )} \cdot t_\mathcal{D} ( W(I_n), \fluentState{S}^{W}_{>I_n} ) \cdot \model{\mathcal{D}_{<I_n}}{W'}.
\end{align*}
which is well defined since $\epsilon(N,W) > 0$ for any action narrative $N$ and world $W$.
\end{proof}

\begin{corollary}\label{cor:decomposingw} Let $ \mathcal{D} $ be any domain description and let $ I $ be any instant. Then $W$ is well-behaved w.r.t. $ \mathcal{D}_{\leq I} $ if and only if (i) there exists a unique world $W'$ well-behaved w.r.t. $\mathcal{D}_{<I}$ which is indistinguishable from $W$ up to $I$, (ii) for all $I' > I$ then $W(I')\restriction\mathcal{F} = \fluentState{S}^{W}_{>I_n}$ where $ \fluentState{S}^{W}_{>I_n} = (W(I)\restriction\mathcal{F}) \oplus \chi( O ) $ for some outcome $O \in \head(c)$ if $I \in \occ_{\mathcal{D}_{\leq I}}(W)$, and $\fluentState{S}^{W}_{>I_n}=W(I)\restriction\mathcal{F}$ otherwise, and (iii) $W$ satisfies CWA for actions w.r.t. $ \mathcal{D}_{\leq I} $.
	
	Furthermore, for $\fluentState{S}^{W}_{>I_n}$ the unique fluent state taken by $W$ at instants $I>I_n$:
\begin{equation}\label{eq:recursiveModelCorollary}
\model{\mathcal{D}_{\leq I}}{W} = \frac{\epsilon_{\mathcal{D}_{\leq I}} ( W )}{\epsilon_{\mathcal{D}_{<I}} ( W' )} \cdot \model{\mathcal{D}_{<I}}{W'} \cdot t_\mathcal{D} ( W(I), \fluentState{S}^{W}_{>I} )
\end{equation}
\end{corollary}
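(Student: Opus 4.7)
The plan is to extend Proposition \ref{prop:decomposingw} from its maximal-occurrence setting to an arbitrary instant $I$ for the restricted domain $\mathcal{D}_{\leq I}$, proceeding by case analysis on whether $I \in \occ_{\mathcal{D}_{\leq I}}(W)$. I would first record three preliminary observations used throughout: (a) $(\mathcal{D}_{\leq I})_{<I} = \mathcal{D}_{<I}$, directly from the restriction definitions; (b) the transition function $t_\mathcal{D}$ depends only on c-propositions, so $t_\mathcal{D} = t_{\mathcal{D}_{\leq I}} = t_{\mathcal{D}_{<I}}$; and (c) CWA w.r.t. $\mathcal{D}_{\leq I}$ forbids any action in $W$ at instants strictly greater than $I$, and since every c-proposition body Herbrand-entails at least one positive action literal, $\occ_{\mathcal{D}_{\leq I}}(W) \subseteq (-\infty, I]$.

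Case 1, in which $I \in \occ_{\mathcal{D}_{\leq I}}(W)$, is then immediate: by (c), $I$ is the maximum of $\occ_{\mathcal{D}_{\leq I}}(W)$, and Proposition \ref{prop:decomposingw} applied to $\mathcal{D}_{\leq I}$ in place of $\mathcal{D}$ with $I_n := I$ yields (i)--(iii) together with the recursive equation, after using (a) to identify $(\mathcal{D}_{\leq I})_{<I_n}$ with $\mathcal{D}_{<I}$.

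Case 2, in which $I \notin \occ_{\mathcal{D}_{\leq I}}(W)$, is where the real work lies and is where I expect the main obstacle to be. Since no cause occurs at $I$ in $W$, Justified Change immediately yields $W(I')\restriction\mathcal{F} = W(I)\restriction\mathcal{F}$ for all $I' > I$, which is condition (ii); condition (iii) is well-behavedness itself. For (i) I would construct the candidate $W'$ by keeping all fluent values of $W$ and all action values at instants $\neq I$, and setting $\lit{A}{\False} \in W'(I)$ for every $A \in \mathcal{A}$, thereby removing any action literals at $I$ that a p-proposition in $\mathcal{D}$ at instant $I$ may have permitted. I would then verify well-behavedness of $W'$ w.r.t. $\mathcal{D}_{<I}$: CWA is vacuous at $I$ since $\mathcal{D}_{<I}$ has no p-propositions at $I$; the Initial Condition is shared with $\mathcal{D}$ since the i-proposition is the same; and Justified Change holds because $W$ and $W'$ induce the same occurrence set (both exclude $I$ by the positive-action-literal argument from (c)) and activate the same c-proposition at each occurrence. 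Uniqueness of $W'$ follows because indistinguishability fixes the fluents on $[0,I]$ and the actions on $[0,I)$; CWA w.r.t. $\mathcal{D}_{<I}$ then forces no actions at $I$ or thereafter, and Justified Change pins down the fluents for $I' > I$. The converse (``if'') direction runs the same construction in reverse.

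For the recursive equation in Case 2, two identities close the loop. First, $t_\mathcal{D}(W(I), W(I)\restriction\mathcal{F}) = 1$, which follows from the $\tset_\mathcal{D}$ clause for states that do not Herbrand-entail any c-proposition body---precisely the content of $I \notin \occ_{\mathcal{D}_{\leq I}}(W)$. Second, $TR^W_{\mathcal{D}_{\leq I}} = TR^{W'}_{\mathcal{D}_{<I}}$, since initial choice, occurrence set and activated c-propositions coincide, so exactly the same $(\ic,\ec)$ pairs parameterise both trace sets and yield equal evaluations. Unfolding Definition \ref{def:model} for $\model{\mathcal{D}_{\leq I}}{W}$, multiplying and dividing by $\epsilon_{\mathcal{D}_{<I}}(W')$ to reassemble $\model{\mathcal{D}_{<I}}{W'}$, and inserting the unit factor $t_\mathcal{D}(W(I),W(I)\restriction\mathcal{F})$ yields the claimed formula.
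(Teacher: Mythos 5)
Your proof is correct and follows essentially the same route as the paper's: a case split on whether $I \in \occ_{\mathcal{D}_{\leq I}}(W)$, invoking Proposition \ref{prop:decomposingw} with $I_n := I$ in the first case, and in the second case observing that no state change occurs at $I$ so that the formula holds with $t_\mathcal{D}(W(I), \fluentState{S}^{W}_{>I}) = 1$. The paper's own proof is just a two-sentence sketch of this; you additionally spell out details it leaves implicit (the explicit construction and uniqueness of $W'$, the equality of trace sets, and why $I$ is the maximum occurrence instant), all of which check out.
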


\begin{proof}
	If $I \in \occ_{\mathcal{D}_{\leq I}} (W)$ then the corollary follows directly from Proposition \ref{prop:decomposingw} since the domain description $ \mathcal{D}_{\leq I} $ satisfies all of its hypotheses. If $I \notin \occ_{\mathcal{D}_{\leq I}} (W)$ then no change of state occurs at $I$ in $W$, i.e. $W(I)\restriction \mathcal{F} = W(I')\restriction \mathcal{F} = \fluentState{S}^{W}_{>I}$ for any $I'>I$, and Equation \eqref{eq:recursiveModelCorollary} holds for $t_\mathcal{D} ( W(I), \fluentState{S}^{W}_{>I} ) = 1$.
\end{proof}

\begin{lemma}\label{lem:t} For any $ \mathcal{D} $ and any state $S$, \[ \sum_{\fluentState{S}' \in \fluentState{\mathcal{S}} } t_\mathcal{D} (S, \fluentState{S}') = 1. \]
\end{lemma}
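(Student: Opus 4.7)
The plan is to split the argument on whether any c-proposition of $\mathcal{D}$ is activated by $S$, and in each case rewrite the sum so that the terms regroup into the (known) total weight of a finite set of outcomes.

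First I would unfold $t_\mathcal{D}(S,\fluentState{S}') = \weight(\tset_\mathcal{D}(S,\fluentState{S}'))$ via Definition \ref{def:weightOfASetOfOutcomes} into a nested sum
\[
\sum_{\fluentState{S}'\in\fluentState{\mathcal{S}}} t_\mathcal{D}(S,\fluentState{S}') \;=\; \sum_{\fluentState{S}'\in\fluentState{\mathcal{S}}} \sum_{O\in \tset_\mathcal{D}(S,\fluentState{S}')} \pi(O).
\]
In the case where no c-proposition of $\mathcal{D}$ has body Herbrand-entailed by $S$, the definition of $\tset_\mathcal{D}$ leaves only a single non-empty term, $\tset_\mathcal{D}(S, S\restriction\mathcal{F}) = \{(\emptyset,1)\}$, whose weight is $1$, while every other $\fluentState{S}'$ contributes $\weight(\emptyset)=0$.

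The substantive case is when some (necessarily unique, by clause (i) of Definition \ref{def:domaindescription}) c-proposition $c$ is activated by $S$. Here I would argue that each outcome $O \in \head(c)$ lies in exactly one transition set, namely $\tset_\mathcal{D}(S,\fluentState{S}'_O)$ with $\fluentState{S}'_O := (S\restriction\mathcal{F}) \oplus \chi(O)$: membership in any $\tset_\mathcal{D}(S,\fluentState{S}')$ requires $\fluentState{S}' = (S\restriction\mathcal{F})\oplus\chi(O)$, which pins down $\fluentState{S}'$. Hence the family $\{\tset_\mathcal{D}(S,\fluentState{S}') : \fluentState{S}' \in \fluentState{\mathcal{S}}\}$ forms a partition of $\head(c)$ into (possibly empty) blocks indexed by $\fluentState{\mathcal{S}}$. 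Swapping the order of summation,
\[
\sum_{\fluentState{S}'\in\fluentState{\mathcal{S}}} \sum_{O\in \tset_\mathcal{D}(S,\fluentState{S}')} \pi(O) \;=\; \sum_{O\in \head(c)} \pi(O) \;=\; \weight(\head(c)) \;=\; 1,
\]
where the last equality is the defining normalisation condition on c-propositions.

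There is no real obstacle; the only subtlety to be careful about is that distinct outcomes in $\head(c)$ can share the same updated fluent state (for instance an outcome with $\chi(O)=\emptyset$ together with one whose $\chi$ re-asserts values already held in $S\restriction\mathcal{F}$). This does not damage the partition argument because we are indexing the partition by the destination $\fluentState{S}'$, not by $\chi(O)$; each outcome still belongs to exactly one block and is therefore counted exactly once.
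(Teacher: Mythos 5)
Your proof is correct and follows essentially the same route as the paper's: the same case split on whether $S$ activates a c-proposition, and in the substantive case the same key observation that each $O\in\head(c)$ belongs to exactly one transition set (the one indexed by $(S\restriction\mathcal{F})\oplus\chi(O)$), so the sum regroups into $\weight(\head(c))=1$. Your closing remark about distinct outcomes landing in the same destination fluent state is a sound clarification, not a deviation.
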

\begin{proof}
	We prove this by cases:
	
	\textbf{Case 1.} If there is no c-proposition $c$ such that $S$ Herbrand-entails $\body(c)$, then it follows from Definition \ref{def:transition} that \[ \sum_{\mathclap{\fluentState{S}' \in \fluentState{\mathcal{S}}} } t_\mathcal{D}(S, \fluentState{S}') = t_\mathcal{D} (S, S\restriction\mathcal{F}) = \pi( (\emptyset, 1) ) = 1 \] which is what we want.
	
	\textbf{Case 2.} Let $c$ be the unique c-proposition $S$ Herbrand-entails $\body(c)$. Then, applying the definition of $t_\mathcal{D}$ from Definition \ref{def:transition} gives
	\begin{equation}\label{eq:lemmacase2}
		\sum_{ \fluentState{S}' \in \fluentState{\mathcal{S}}} t_\mathcal{D} (S,\fluentState{S}') = \sum_{\mathclap{ \fluentState{S}' \in \fluentState{\mathcal{S}} }}  \pi ( \tset_\mathcal{D} (S, \fluentState{S}') )
	\end{equation}
	
	Notice that for a fixed outcome $O$, it is impossible to have $O\in\tset_\mathcal{D} (S,\fluentState{S}')$ and $O\in\tset_\mathcal{D} (S,\fluentState{S}'')$ for two distinct fluent states $\fluentState{S}', \fluentState{S}''$ as this would imply $\fluentState{S}'=(S\restriction\mathcal{F})\oplus\chi(O)=\fluentState{S}''$. Hence it is sufficient to show that $\{ O\in\tset_\mathcal{D} (S,\fluentState{S}') \mid \fluentState{S}' \in \fluentState{\mathcal{S}} \} = \head(c) $, as this implies that the sum \eqref{eq:lemmacase2} equals $1$ since $\weight( \head(c) )=1$ by definition of a c-proposition.
	
	By definition of a transition set, $\{ O\in\tset_\mathcal{D} (S,\fluentState{S}') \mid \fluentState{S}' \in \fluentState{\mathcal{S}} \} \subseteq \head(c)$. Conversely, for any $O\in \head(c)$, $O \in \tset_\mathcal{D} (S, \fluentState{S}')$ for $\fluentState{S}' = (S\restriction\mathcal{F}) \oplus \chi(O)$, hence $ \head(c) \subseteq \{ O\in\tset_\mathcal{D} (S,\fluentState{S}') \mid \fluentState{S}' \in \fluentState{\mathcal{S}} \} $ which ends the proof of lemma.
\end{proof}

\begin{lemma}\label{lem:e}
	Let  $ \mathcal{D} $ be any domain description, $ I $ be any instant and $ N_I $ be the possibly empty action narrative that contains exactly those p-propositions in $ \mathcal{D} $ that occur at $ I $. Let $ W_1, \dots, W_m $ be well-behaved worlds w.r.t. $ \mathcal{D} $ such that $ W_i(I)\restriction\mathcal{F} = W_j(I)\restriction\mathcal{F} $ for all $1\leq i,j \leq m$ and which represent all the equivalence classes such that $ W $ is equivalent to $ W' $ if and only if $ W(I)=W'(I) $. Then, \[ \sum_{j=1}^m \epsilon(N_I, W_j) = 1 \]
\end{lemma}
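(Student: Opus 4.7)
The plan is to enumerate the well-behaved representatives $W_1, \dots, W_m$ by the action-parts at $I$ that are admissible under the CWA for actions, and then to sum $\epsilon(N_I, W_j)$ by a binomial-style factorisation across the p-propositions of $N_I$.

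First I would partition $N_I$ into $N_I^1$, the p-propositions with $P^+ = 1$, and $N_I^{<1}$, those with $P^+ < 1$. Definition \ref{def:cwa} together with clause (iv) of Definition \ref{def:domaindescription} forces the action part $W(I) \restriction \mathcal{A}$ of every well-behaved $W$ to contain every action mentioned in $N_I^1$, to omit every action that appears in no p-proposition of $N_I$, and to be unconstrained on each action mentioned in $N_I^{<1}$. Since each such action occurs in at most one p-proposition at $I$, this caps the number of equivalence classes at $2^{|N_I^{<1}|}$.

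Next I would show that every CWA-admissible action part $\alpha$ at $I$ is actually realised by some well-behaved world that shares the prescribed $W(I) \restriction \mathcal{F}$. Starting from $W_1$ (well-behaved with the fixed fluent state at $I$ by hypothesis), I would build $W_\alpha$ by inheriting $W_1$'s action parts at every instant $I' \neq I$ and $W_1$'s fluent parts at every $I' \leq I$, overwriting $W_1(I) \restriction \mathcal{A}$ with $\alpha$, and then inductively setting fluent states for $I' > I$ by picking an arbitrary outcome from the head of whichever c-proposition (if any) is activated at each subsequent transition — such outcomes always exist because c-proposition heads are non-empty by the weight-$1$ requirement. This construction preserves CWA and the initial condition, and enforces justified change by fiat, so $W_\alpha$ is well-behaved. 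Consequently $m = 2^{|N_I^{<1}|}$ and the $W_j$ enumerate precisely the CWA-admissible action parts.

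With the enumeration pinned down, the sum factorises: for each $W_j$, $\epsilon(N_I, W_j) = \prod_{p \in N_I^1} 1 \cdot \prod_{p \in N_I^{<1}} \epsilon(p, W_j)$, where each factor in the second product is $P^+_p$ or $1 - P^+_p$ depending on whether the action of $p$ lies in $W_j(I)$. Summing over $j$ and distributing yields $\prod_{p \in N_I^{<1}} (P^+_p + (1 - P^+_p)) = 1$, as required. I expect the main obstacle to be the constructive realisation step: because different action parts at $I$ can activate different c-propositions (or none) at $I$, the fluent trajectories after $I$ may diverge from $W_1$'s, so I have to argue carefully that a consistent extension satisfying justified change and CWA at every later instant always exists.
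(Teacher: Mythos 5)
Your proposal is correct and takes essentially the same route as the paper's proof: fix the common fluent state at $I$, use the CWA for actions (plus clause (iv) of Definition \ref{def:domaindescription}) to identify the admissible action parts at $I$ with the $2^{k}$ truth assignments to the actions whose p-propositions at $I$ have probability strictly below $1$, and expand the sum as $\prod_{p}\bigl(P_p+(1-P_p)\bigr)=1$. The only difference is that you explicitly construct a well-behaved world realising each admissible action part, a step the paper merely asserts (``there are at least $2^k$ well-behaved worlds''), so your version is, if anything, more detailed on that point.
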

\begin{proof}
	Let $p_1, \dots, p_k$ be the p-propositions in $ N_I $ (possibly none, in which case $k=0$) that have probabilities strictly less than $ 1 $ attached. Then, there are at least $2^k$ well-behaved worlds w.r.t. $ \mathcal{D} $ satisfying $ W(I) \restriction \mathcal{F} =W'(I) \restriction \mathcal{F} $, of which $W_1, \dots, W_m$ are representatives of each possible assignment of actions to $ \{\True,\False\} $. Therefore, if we let $ p_i$ have the form ``$ A_i \performedAt I \withProb P_i $'' for all $ 1\leq i \leq k $, the sum $ \sum_{j=1}^m \epsilon(N_I, W_j) $ evaluates to:
\[ P_1 \cdot P_2 \dots P_k + P_1 \cdot P_2 \dots (1-P_k) + \dots + (1-P_1) \cdot (1-P_2) \dots (1-P_k) = 1 \]
\end{proof}

We can now prove the central property of $ M^* $:

\begin{proposition} Given a model $\model{\mathcal{D}}$ of a domain description $\mathcal{D}$, its extension to $\modelStar{\mathcal{D}}$ is a probability function.
	\begin{proof} We show that for any domain description, requirements \ref{item:probabilityFunction1} and \ref{item:probabilityFunction2} as in Definition \ref{def:probabilityFunction} are always satisfied by a model of that domain description.
    
    \textbf{Proof of requirement \ref{item:probabilityFunction1}.} We need to show that for any $\psi$ such that $W\mmodels\psi$ for all worlds $W$, 
		\begin{equation} \label{eq:sumallworlds}
			\modelStar{\mathcal{D}}{\psi}=\sum_{\mathclap{W\in\mathcal{W}}} \model{\mathcal{D}}{W} = \sum_{\mathclap{W\in\mathcal{W}_\mathcal{D} }} \model{\mathcal{D}}{W} = 1.
		\end{equation}
		where the second equality is guaranteed by the fact that $\model{\mathcal{D}}{W} = 0$ when $W$ is not well-behaved.
				
		For a p-proposition $p$ we define $\hasInstant(p)$ as the instant $p$ has, and for an action narrative $N$ we extend this to:
		\[
			\hasInstants (N)= \bigcup_{\mathclap{ p \in N }} \hasInstant(p)
		\]
		
		We prove \eqref{eq:sumallworlds} by induction on $\hasInstants( \narr(D) ) = \{ I_1, \dots, I_n \}$ where $I_1, \dots, I_n$ are ordered w.r.t. $\leq$. Notice that $\mathcal{D}_{\leq I_n} = \mathcal{D}$.
		
		\textit{Base case.} We consider $\mathcal{D}_{\emptyset}$ first. Since there are no p-propositions in $\mathcal{D}_{\emptyset}$, $\hasInstants( \mathcal{D}_{\emptyset} )=\emptyset$, $\epsilon_{\mathcal{D}_\emptyset}(W) = \emptyset$ for all worlds $ W $, and the sum \eqref{eq:sumallworlds} becomes:
		\begin{equation}\label{eq:basecase}
			\modelStar{\mathcal{D}_{\emptyset}}{\psi}=  \sum_{\mathclap{ {W\in\mathcal{W}_{\mathcal{D}_{\emptyset}} }}} ~ \left( \sum_{ \tr \in TR^W_{\mathcal{D}_{\emptyset}} } \mkern-18mu \pi( \tr(\initialChoiceSymbol) ) \right)
		\end{equation}

		Let $ \{ O_1, \dots, O_m \} $ be the outcomes occurring in the only i-proposition of $\mathcal{D}_{\emptyset}$. We prove that the well-behaved worlds w.r.t. $\mathcal{D}_{\emptyset}$ are exactly those $W$s taking the form $W( I ) \restriction \mathcal{F} = \chi(O_i)$ and $\neg A \in W(I)$ for all instants $I$ and all action symbols $A$.
		
		If $W$ has this form, then it satisfies CWA (as there are no p-propositions in $\mathcal{D}_{\emptyset}$, and this is consistent with $\neg A \in W(I)$ for all $I$ and $A$), it satisfies the initial condition w.r.t. $\mathcal{D}_{\emptyset}$ as $O_i$ is an initial choice w.r.t. $\mathcal{D}_{\emptyset}$ and $W(\initialInstant)\restriction\mathcal{F} = \chi(O_i)$ by definition, and finally it also satisfies the justified change condition in the form \eqref{eq:justifiedChange} as $\occ_{\mathcal{D}_{\emptyset}} (W) = \emptyset$, which in turn forces $W(I)\restriction\mathcal{F} = W(I')\restriction\mathcal{F}$ for all $I$ and $I'$. The fact that if $W$ is well-behaved then it is of the form above is a simple inversion of the previous chain of implications.
		
		Notice that each of these well-behaved worlds is consistent with a unique trace $\langle O_i@\initialChoiceSymbol \rangle$ for some $1\leq i\leq m$, and let $W_i$ denote the world having trace $\langle O_i@\initialChoiceSymbol \rangle$ for $1\leq i\leq m$. Hence we can write $\mathcal{W}_{\mathcal{D}_{\emptyset}} = \{ W_1, \dots, W_m \}$. For such $W_i$, \[ \sum_{\mathclap{{\tr \in TR^{W_i}_{\mathcal{D}_{\emptyset}}}}} \pi( \tr(\initialChoiceSymbol) ) = \pi(O_i)  \] and \eqref{eq:basecase} evaluates to:
		\begin{equation}
			\modelStar{\mathcal{D}_{\emptyset}}{\psi}= \sum_{\mathclap{W_i \in \mathcal{W}_{\mathcal{D}_{\emptyset}} }} \pi(O_i) = \sum_{ i=1 }^{m} \pi(O_i)= 1
		\end{equation}
		as $\weight( \{ O_1, \dots, O_m \} ) = 1$ by Definition \ref{def:iProposition} of an i-proposition.
		
		\textit{Inductive step.} Assume that $\modelStar{\mathcal{D}_{<I_i}}{\psi} = 1$ for some $i \leq n$. We prove that $\modelStar{\mathcal{D}_{\leq I_i}}{\psi} = 1$.
		
		
		Let $[W']^I_\mathcal{D}$ be the set of well-behaved worlds w.r.t. $ \mathcal{D} $ that are indistinguishable from $ W' $ up to $ I $. Corollary \ref{cor:decomposingw} and Lemma \ref{lem:t} together with the inductive hypothesis allow us to turn Equation \eqref{eq:sumallworlds} into:
		\begin{align*}
			\modelStar{\mathcal{D}_{\leq I_{i}}}{\psi} &= \sum_{\mathclap{W \in \mathcal{W}_{\mathcal{D}_{\leq I_i} } }} \model{\mathcal{D}_{\leq I_{i}}}{W} \\
				&\stackrel{\text{Cor.\ref{cor:decomposingw}}}{=} \sum_{\mathclap{W' \in \mathcal{W}_{\mathcal{D}_{< I_i} } }} ~~~~~ \sum_{W \in [W']^{I_i}_{\mathcal{D}_{\leq I_i}}} \mkern-18mu \frac{\epsilon_{\mathcal{D}_{\leq I_i}} ( W )}{\epsilon_{\mathcal{D}_{<I_i}} ( W' )} \cdot \model{\mathcal{D}_{<I_i}}{W'} \cdot t_\mathcal{D} ( W(I_i), \fluentState{S}^{W}_{>I_i} ) \\
				&= \sum_{\mathclap{W' \in \mathcal{W}_{\mathcal{D}_{< I_i} } }} \model{\mathcal{D}_{<I_i}}{W'} \mkern-18mu \sum_{{W \in [W']^{I_i}_{\mathcal{D}_{\leq I_i}}}} \mkern-18mu \frac{\epsilon_{\mathcal{D}_{\leq I_i}} ( W )}{\epsilon_{\mathcal{D}_{<I_i}} ( W' )} \cdot t_\mathcal{D} ( W(I_i), \fluentState{S}^{W}_{>I_i} )
		\end{align*}
		
	According to Corollary \ref{cor:decomposingw} every world in $ [W']^{I_i}_{\mathcal{D}_{\leq I_i}} $ can be reconstructed from its state at instant $I_i$ and the unique state $ \fluentState{S}' $ that it takes at instants strictly greater than $I_i$. Consider the equivalence relation such that two well-behaved worlds $W$ and $W'$ w.r.t. $ \mathcal{D}_{\leq I_i} $ are equivalent if and only if $ W(I_i)=W'(I_i) $, and let $ W_1, \dots, W_m $ be representatives of all the equivalence classes. Then, the above chain of equalities continues as follows:
	
		\begin{align*}
			&= \sum_{\mathclap{W' \in \mathcal{W}_{\mathcal{D}_{< I_i} } }} \model{\mathcal{D}_{<I_i}}{W'} \sum_{j=1}^{m} \frac{\epsilon_{\mathcal{D}_{\leq I_i}} ( W_j )}{\epsilon_{\mathcal{D}_{<I_i}} ( W' )} \cdot \sum_{ \fluentState{S}' \in \fluentState{\mathcal{S}} } t_\mathcal{D} ( W_j(I_i), \fluentState{S}' ) \\
			&\stackrel{\text{Lem.\ref{lem:t}}}{=} \sum_{\mathclap{W' \in \mathcal{W}_{\mathcal{D}_{< I_i} } }} \model{\mathcal{D}_{<I_i}}{W'} \sum_{j=1}^{m} \frac{\epsilon_{\mathcal{D}_{\leq I_i}} ( W_j )}{\epsilon_{\mathcal{D}_{<I_i}} ( W' )}\\
			&\stackrel{\text{Lem.\ref{lem:e}}}{=} \sum_{\mathclap{W' \in \mathcal{W}_{\mathcal{D}_{< I_i} } }} \model{\mathcal{D}_{<I_i}}{W'} \stackrel{\text{Ind.Hyp.}}{=} 1
		\end{align*}
		
	\textbf{Proof of requirement \ref{item:probabilityFunction2}.} Let $\varphi$ and $\psi$ be two i-formulas such that $\varphi \mmodels \neg \psi$. Obviously, since $\varphi \mmodels \neg \psi$ if for some $W\in \mathcal{W}$, $W \mmodels \varphi$, then $W\notmmodels \psi$ and vice-versa, hence \[ M^*_\mathcal{D} (\varphi \vee \psi) = \sum_{\mathclap{W \mmodels \varphi \vee \psi}} M_\mathcal{D} (W) = \sum_{\mathclap{W \mmodels \varphi}} M_\mathcal{D} (W) + \sum_{\mathclap{W \mmodels \psi}} M_\mathcal{D} (W) = M^*_\mathcal{D}(\varphi) + M^*_\mathcal{D} (\psi). \]

	\end{proof}
\end{proposition}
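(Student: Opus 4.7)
The plan is to verify the two defining properties of a probability function separately, with the bulk of the work going into the first requirement.

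For requirement (1), I would first reduce showing $\modelStar{\mathcal{D}}{\psi} = 1$ when $\mmodels \psi$ to showing $\sum_{W \in \mathcal{W}_{\mathcal{D}}} \model{\mathcal{D}}{W} = 1$, since a tautology is satisfied by every world and non-well-behaved worlds contribute zero. I would then prove this normalization identity by induction on the set of instants appearing in $\narr(\mathcal{D})$, ordered by $\leq$. For the base case I would consider $\mathcal{D}_{\emptyset}$: the well-behaved worlds w.r.t. $\mathcal{D}_{\emptyset}$ are exactly those that take the fluent state $\chi(O_i)$ of some initial choice $O_i$ at every instant and never perform any action (CWA forces the latter since there are no p-propositions, and absence of causes forces the fluent state to be constant via justified change). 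Each such world has a unique trace $\langle O_i@\initialChoiceSymbol \rangle$, so the sum reduces to $\sum_i \pi(O_i) = 1$ by Definition \ref{def:iProposition}.

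For the inductive step, I would assume the identity holds for $\mathcal{D}_{<I_i}$ and establish it for $\mathcal{D}_{\leq I_i}$. By Corollary \ref{cor:decomposingw}, each well-behaved world $W$ w.r.t. $\mathcal{D}_{\leq I_i}$ determines a unique well-behaved world $W'$ w.r.t. $\mathcal{D}_{<I_i}$ indistinguishable from $W$ up to $I_i$, together with an action assignment at $I_i$ (constrained by CWA) and a fluent-state outcome for instants past $I_i$. I would rewrite the outer sum as a nested sum, first over $W'$ well-behaved w.r.t. $\mathcal{D}_{<I_i}$, then over worlds extending $W'$, factoring $\model{\mathcal{D}_{<I_i}}{W'}$ outside. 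The inner sum further decomposes as a sum over action assignments at $I_i$ (carrying the $\epsilon_{\mathcal{D}_{\leq I_i}}(W)/\epsilon_{\mathcal{D}_{<I_i}}(W')$ ratio) and over post-$I_i$ fluent states $\fluentState{S}'$ (carrying $t_\mathcal{D}(W(I_i), \fluentState{S}')$). Lemma \ref{lem:t} collapses the sum over $\fluentState{S}'$ to $1$, and Lemma \ref{lem:e} collapses the sum over action assignments of the $\epsilon$ ratio to $1$. The total then reduces to $\sum_{W'} \model{\mathcal{D}_{<I_i}}{W'} = 1$ by the inductive hypothesis. For requirement (2), the argument is short: since $\varphi \mmodels \neg \psi$, no world satisfies both, so the set of worlds satisfying $\varphi \vee \psi$ is the disjoint union of those satisfying $\varphi$ and those satisfying $\psi$, and the defining sum of $\modelStar{\mathcal{D}}$ splits linearly into $\modelStar{\mathcal{D}}{\varphi} + \modelStar{\mathcal{D}}{\psi}$.

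The main obstacle will be executing the double-counting bookkeeping in the inductive step cleanly: one has to verify that $\epsilon_{\mathcal{D}_{\leq I_i}}(W)/\epsilon_{\mathcal{D}_{<I_i}}(W')$ depends only on the action assignment at $I_i$ (so that Lemma \ref{lem:e} applies directly), and simultaneously that the transition weight separates out so Lemma \ref{lem:t} can sum it to $1$ over $\fluentState{S}'$ independently of the action-assignment sum. This coordination of the two orthogonal sources of uncertainty (which p-propositions fire and which outcome is chosen by the c-proposition) is precisely what Corollary \ref{cor:decomposingw} is engineered to enable, and it is where care is required to avoid conflating the two partitions.
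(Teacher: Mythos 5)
Your proposal matches the paper's own proof essentially step for step: the same reduction of requirement (1) to the normalization $\sum_{W\in\mathcal{W}_{\mathcal{D}}}\model{\mathcal{D}}{W}=1$, the same induction on $\hasInstants(\narr(\mathcal{D}))$ with base case $\mathcal{D}_{\emptyset}$ handled via the initial choices, the same inductive step combining Corollary \ref{cor:decomposingw}, Lemma \ref{lem:t} and Lemma \ref{lem:e}, and the same disjoint-union argument for requirement (2). The bookkeeping concern you flag at the end is exactly what the paper resolves by grouping the extending worlds into equivalence classes by their state at $I_i$, so there is nothing missing.
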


An immediate consequence of the previous proposition is the following one:

\begin{corollary} For any given domain description $ \mathcal{D} $, $\mathcal{W}_\mathcal{D} \neq \emptyset$.
\end{corollary}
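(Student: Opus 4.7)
The plan is to argue by contradiction using the probability function property just established. Suppose $\mathcal{W}_\mathcal{D} = \emptyset$, i.e.\ there is no well-behaved world w.r.t.\ $\mathcal{D}$. By clause 1 of Definition \ref{def:model}, $M_\mathcal{D}(W) = 0$ for every $W \in \mathcal{W}$, since no world is well-behaved.

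Now pick any tautological i-formula $\psi$, for instance $\ifor{\lit{F}{V} \vee \neg \lit{F}{V}}{\initialInstant}$ for some $F \in \mathcal{F}$ and $V \in \vals(F)$ (such $F$ and $V$ exist since $\mathcal{F}$ is non-empty and $\vals(F)$ is non-empty by the definition of a domain language). Then $\mmodels \psi$ trivially, so by Definition \ref{def:Mstar},
\[ M^*_\mathcal{D}(\psi) = \sum_{W \mmodels \psi} M_\mathcal{D}(W) = \sum_{W \in \mathcal{W}} 0 = 0. \]
But the preceding proposition establishes that $M^*_\mathcal{D}$ is a probability function, so by requirement \ref{item:probabilityFunction1} of Definition \ref{def:probabilityFunction} we must have $M^*_\mathcal{D}(\psi) = 1$. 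This contradicts $M^*_\mathcal{D}(\psi) = 0$, so $\mathcal{W}_\mathcal{D} \neq \emptyset$.

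There is no real obstacle here, since the corollary is an immediate consequence of normalisation ($M^*_\mathcal{D}(\top) = 1$); the only minor care needed is to exhibit a concrete tautological i-formula, which requires nothing more than the non-emptiness of $\mathcal{F}$ and $\vals(F)$ already guaranteed by the definition of a domain language.
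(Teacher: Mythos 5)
Your proof is correct and matches the paper's intent: the paper offers no separate argument, simply noting the corollary is an immediate consequence of the preceding proposition, and your contradiction via requirement \ref{item:probabilityFunction1} (normalisation forces $\modelStar{\mathcal{D}}{\psi}=1$, impossible if every $\model{\mathcal{D}}{W}=0$) is exactly that consequence spelled out. Exhibiting the concrete tautology $\ifor{\lit{F}{V} \vee \neg \lit{F}{V}}{\initialInstant}$ is a harmless extra precaution consistent with the paper's definitions.
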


\subsection{Example entailments}\label{sec:entailments}
The following are example entailments from the formalisation of Scenarios \ref{scenario:coin}, \ref{scenario:antibiotic} and \ref{scenario:keys}.

$ \mathcal{D}_C $ as in Example \ref{ex:coindomain} entails, among others:\\

\noindent
\hspace*{\axiomsIndent}$\top \holdsWithProb 1$\hfill($\mmodels$C1)\\

\noindent
\hspace*{\axiomsIndent}$\ifor{\Coin=\Tails}{0} \holdsWithProb 0$\hfill($\mmodels$C2)\\

\noindent
\hspace*{\axiomsIndent}$\ifor{\Toss=\True}{1} \holdsWithProb 1$\hfill($\mmodels$C3)\\

\noindent
\hspace*{\axiomsIndent}$\ifor{\Coin=\Heads}{2} \holdsWithProb 0.51$\hfill($\mmodels$C4)\\

\noindent
\hspace*{\axiomsIndent}$\ifor{\Coin=\Heads}{1} \wedge \ifor{\Coin=\Tails}{3}$\hfill($\mmodels$C5)\\
\hspace*{1.5em}$\holdsWithProb 0.49$\\

\noindent
where $\top$ is any tautological i-formula (i.e., $W \mmodels \top$ for all $W\in\mathcal{W}$).

The following h-propositions are entailed by $ \mathcal{D}_A $:\\

\noindent
\hspace*{\axiomsIndent}$\ifor{\Bacteria=\Weak}{0} \holdsWithProb 0.9$\hfill($\mmodels$A1)\\

\noindent
\hspace*{\axiomsIndent}$\ifor{\Bacteria=\Weak \wedge \Rash=\Absent}{0}$\hfill($\mmodels$A2)\\
\hspace*{\axiomsMedIndent}$\holdsWithProb 0$\\

\noindent
\hspace*{\axiomsIndent}$\ifor{\Bacteria=\Resistant}{2} \holdsWithProb 0.27$\hfill($\mmodels$A3)\\

\noindent
\hspace*{\axiomsIndent}$\ifor{\Rash=\Absent}{4} \holdsWithProb 0.733846$\hfill($\mmodels$A4)\\

\noindent
\hspace*{\axiomsIndent}$\ifor{\Bacteria=\Absent \wedge \Rash=\Absent }{4}$\hfill($\mmodels$A5)\\
\hspace*{\axiomsMedIndent}$\holdsWithProb 0.650769$\\

Notice that from ($\mmodels$A4) and ($ \mmodels $A5) we can calculate the conditional probability that the medicine has cured the infection at instant $4$, i.e. $\ifor{\Bacteria=\Absent}{4}$, given that no sign of rash is visible at the end of the treatment, i.e. $\ifor{\Rash=\Absent}{4}$. Applying \eqref{eq:conditionalprobability} gives that this probability equals $0.650769/0.733846 \approx 0.887$.

Finally, the following h-propositions are entailed by $ \mathcal{D}_K $:\\

	\noindent
	\hspace*{\axiomsIndent}$\ifor{\LockedOut}{\text{8AM}} \holdsWithProb 0.01$,\hfill($\mmodels$K1)\\

	\noindent	
	\hspace*{\axiomsIndent}$\ifor{\HasKeys}{\text{9AM}} \holdsWithProb 0.99$,\hfill($\mmodels$K2)\\
	
	\noindent	
	\hspace*{\axiomsIndent}$\ifor{\PickupKeys}{\text{7:40AM}} \holdsWithProb 0.99$.\hfill($\mmodels$K3)
\section{Translation}

To aid the reader's intuition, we outline the translation of a domain description $\mathcal{D}$ into an answer set program. The idea is that of generating all the traces of a domain description as distinct stable models of the translated domain description. These traces can then be processed by an external tool such as AWK in order to calculate the probability of given queries.

In the following, we restrict the domain language to be such that $\mathcal{I}$ is a finite interval $\{0,1,\dots,\maxinst\}$ of $\mathbb{N}$, with $\initialInstant=0$ and $\leq = \leq_\mathbb{N}$ being the usual ordering relation between naturals.

\subsection{Translation of the domain-dependent part}\label{sec:translationdomaindependent}
We start by introducing the full translation of the coin domain description from Example \ref{ex:coindomain}.

\begin{example}[Translation of the Coin Toss Domain]\label{ex:translationcoin} Let $\mathcal{D}_C$ be as in Example \ref{ex:coindomain}. The translation of $\mathcal{D}_C$ results in the following set of clauses:\\

\noindent
\hspace*{\axiomsIndent}$\fluent(\coin).$ \hfill(TC0)\\
\hspace*{\axiomsIndent}$\action(\toss).$\\
\hspace*{\axiomsIndent}$\instant(0..\maxinst).$\\
	
\noindent
\hspace*{\axiomsIndent}$\possVal( \coin, \heads ).$\hfill(TC1)\\
\hspace*{\axiomsIndent}$\possVal( \coin, \tails ).$\hfill\\

\noindent
\hspace*{\axiomsIndent}$\belongsTo( (\coin,\heads), id^0_1 ).$\hfill(TC2)\\
\hspace*{\axiomsIndent}$\initialCondition ( (id^0_1, 1) ).$\hfill\\

\noindent
\hspace*{\axiomsIndent}$\belongsTo( (\coin,\heads), id^1_1 ).$\hfill(TC3.1)\\
\hspace*{\axiomsIndent}$\causesOutcome ( (id^1_1, 49/100 ), I ) \leftarrow $\hfill\\
\hspace*{\axiomsMedIndent}$ \holds( ((\toss,\true), I) ). $\hfill\\

\noindent
\hspace*{\axiomsIndent}$\belongsTo( (\coin,\tails), id^1_2 ).$\hfill(TC3.2)\\
\hspace*{\axiomsIndent}$\causesOutcome ( (id^1_2, 49/100 ), I ) \leftarrow $\hfill\\
\hspace*{\axiomsMedIndent}$ \holds( ((\toss,\true), I) ). $\hfill\\

\noindent
\hspace*{\axiomsIndent}$\causesOutcome ( (id^1_3, 2/100 ), I ) \leftarrow $\hfill(TC3.3)\\
\hspace*{\axiomsMedIndent}$ \holds( ((\toss,\true), I) ). $\hfill\\

\noindent
\hspace*{\axiomsIndent}$\performed( \toss, 1 ).$\hfill(TC4)\\

\noindent where, informally, the set of clauses (TC0) is the translation of the three sorts $\mathcal{F}$, $ \mathcal{A} $ and $ \mathcal{I} $; (TC1), (TC2) and (TC4) are the translation of (C1), (C2) and (C4) respectively; (TC3.1), (TC3.2) and (TC3.3) together give the translation of (C3), and each of them corresponds to an outcome in the corresponding c-proposition;
\end{example}

Since in logic programming lowercase letters are conventionally used for constants, we switch to that convention by letting lower case letters be the logic programming counterparts of (upper case) constants in PEC so that e.g. $f$ is regarded as the translated fluent $F$. Furthermore, literals of the form $X=V$ are translated into pairs of the form $(x,v)$.

The three sorts $\mathcal{F}$, $ \mathcal{A} $ and $\mathcal{I}$ are translated to the three sets $\{ \fluent(f) \mid F\in\mathcal{F} \}$, $\{ \action(a) \mid A\in\mathcal{A} \}$ and $\{ \instant(i) \mid I\in\mathcal{I} \}$ respectively (see e.g. (TC0) in Example \ref{ex:translationcoin}).

Let $c$ be a c-proposition of the generic form \eqref{eq:cProposition}:
\[ \theta \causesOneOf \{ O_1, O_2, \dots, O_m \} \]
but first considering the case where $\theta$ is a conjunction of the form $X_1=V_1\wedge\dots\wedge X_j=V_j$. Given a conjunction $ \theta$ as before, we write $\holds( \ifor{\theta}{I} ) $ as a shorthand for the logic programming conjunction \[ \holds( ((x_1,v_1),I) ), \dots, \holds( ((x_j,v_j),I) ). \]

Fix an enumeration (without repetitions) of all the c-propositions in $\mathcal{D}$, and let $c$ be the $n$th proposition occurring in such enumeration. Then, $c$ is translated to:\\

\noindent
\hspace*{\axiomsIndent}$\{ \, \belongsTo( (x,v), id^n_i ) \mid i=1, \dots, m, X=V \in\chi(O_i) \, \}$\\
\hspace*{\axiomsMedIndent}$\cup \{ \, \causesOutcome( (id^n_i, p), I ) \leftarrow \holds(\theta, I) \mid$\\
\hspace*{\axiomsBigIndent}$i=1,\dots,m, \, , P=\pi(O_i) \, \} $\\

\noindent where $id^n_1, \dots, id^n_m$ are new constants in the underlying ASP language. We write $ C_\mathcal{D} $ for the set of all translated c-propositions in $ \mathcal{D} $.

\begin{example}\label{ex:translationcproposition} The clauses (TC3.1), (TC3.2) and (TC3.3) are the translation of the c-proposition (C2) from Example \ref{ex:coindomain}.
	
	As a further example, consider the c-proposition (A5) as in Example \ref{ex:antibioticdomain}, and notice that two outcomes occur in it, i.e. $ (\{\Bacteria=\Absent, \Rash=\Absent\}, 1/13) $ and $ ( \emptyset , 4/13) $. If we fix the enumeration of c-propositions in $\mathcal{D}_A$ such that (A4) is first and (A5) is second, this c-proposition is translated to:\\
	
	\noindent
	\hspace*{\axiomsIndent}$ \belongsTo( (\bacteria,\absent), id^2_1 ).$\hfill(TA5.1)\\
	\hspace*{\axiomsIndent}$ \belongsTo( (\rash,\absent), id^2_1 ).$\\
	\hspace*{\axiomsIndent}$ \causesOutcome( (id^2_1, 1/13),I ) \leftarrow $\\
	\hspace*{\axiomsMedIndent}$ \holds( (\takesMedicine,\true), I ),$\\
	\hspace*{\axiomsMedIndent}$\holds( (\bacteria,\resistant), I ). $\\
	
	\noindent
	\hspace*{\axiomsIndent}$ \causesOutcome( (id^2_2, 4/13),I ) \leftarrow $\hfill(TA5.2)\\
	\hspace*{\axiomsMedIndent}$ \holds( (\takesMedicine,\true), I ),$\\
	\hspace*{\axiomsMedIndent}$\holds( (\bacteria,\resistant), I ). $\\
\end{example}


If $\theta$ is not a conjunction of literals, then represent it in Disjunctive Normal Form, i.e. in the form $\theta_1 \vee \dots \vee \theta_n$ with $\theta_1, \dots, \theta_n$ conjunctions of literals, and then for each rule write the precondition of each\causesOneOf clause in the disjunctive form:
\[
	\holds(\ifor{\theta_1}{I}); \, \dots ; \, \holds(\ifor{\theta_n}{I}).
\]

The translation of i-propositions works in a very similar way: if $J$ is an i-proposition of the general form \eqref{eq:iProposition}:
\[ \initiallyOneOf \{ O_1, O_2, \dots, O_m \} \]
then its translation is given by the following set of clauses:\\

\noindent
\hspace*{\axiomsIndent}$\{ \, \belongsTo( (x,v), id^0_i ) \mid i=1, \dots, m, X=V\in\chi(O_i) \, \}$\\
\hspace*{\axiomsMedIndent}$\cup \{ \, \initialCondition( (id^0_i,p) ) \mid i=1,\dots,m, P=\pi(O_i) \, \} $\\

\noindent and we write $ I_\mathcal{D} $ for the set of all translated i-propositions in $ \mathcal{D} $.

\begin{example}
	An example of translated i-proposition is the set of clauses (TC2), that translate (C2) as in Example \ref{ex:translationcoin}.
	
	The i-proposition (A3) from Example \ref{ex:antibioticdomain} is translated to:\\
	
	\noindent
	\hspace*{\axiomsIndent}$ \belongsTo( (\bacteria,\weak), id^0_1 ).$\hfill(TA3.1)\\
	\hspace*{\axiomsIndent}$ \belongsTo( (\rash,\present), id^0_1 ).$\\
	\hspace*{\axiomsIndent}$ \initialCondition( (id^0_1, 9/10) ). $\\
	
	\noindent
	\hspace*{\axiomsIndent}$ \belongsTo( (\bacteria,\absent), id^0_2 ).$\hfill(TA3.2)\\
	\hspace*{\axiomsIndent}$ \belongsTo( (\rash,\present), id^0_2 ).$\\
	\hspace*{\axiomsIndent}$ \initialCondition( (id^0_2, 1/10) ). $\\
\end{example}

Finally, the translation of p-propositions and v-propositions is straightforward: any generic p-proposition of the form \eqref{eq:pProposition} is translated to\\

\noindent
\hspace*{\axiomsIndent}$\performed(a,i).$\\

\noindent and we write $ P_\mathcal{D} $ for the set of all translated p-propositions in $ \mathcal{D} $, while any v-proposition of the form \eqref{eq:vProposition} is translated to:\\

\noindent
\hspace*{\axiomsIndent}$\{ \, \possVal( f, v_i ) \mid 1\leq i \leq n \, \}$\\

\noindent and we write $ V_\mathcal{D} $ for the set of all translated v-propositions in $ \mathcal{D} $.

\begin{example}
	The v-proposition (C1) and p-proposition (C4) from from Example \ref{ex:coindomain} are translated to (TC1) and (TC4) as in Example \ref{ex:translationcoin}, respectively, while (A1), (A2), (A6), (A7) from Example \ref{ex:antibioticdomain} are translated to:\\
	
	\noindent
	\hspace*{\axiomsIndent}$ \possVal( \bacteria, \weak ). $\hfill(TA1)\\
	\hspace*{\axiomsIndent}$ \possVal( \bacteria, \resistant ). $\\
	\hspace*{\axiomsIndent}$ \possVal( \bacteria, \absent ). $\\
	
	\noindent
	\hspace*{\axiomsIndent}$ \possVal( \rash, \present ). $\hfill(TA2)\\
	\hspace*{\axiomsIndent}$ \possVal( \rash, \absent ). $\\
	
	\noindent
	\hspace*{\axiomsIndent}$\performed(\takesMedicine,1).$\hfill(TA6)\\
	
	\noindent
	\hspace*{\axiomsIndent}$\performed(\takesMedicine,3).$\hfill(TA7)
\end{example}

Since $P_\mathcal{D}$ and $V_\mathcal{D}$ contain only ground facts which clearly correspond to their semantic counterparts (i.e., p-propositions and v-propositions) we are not going to discuss their correctness in close detail.

We write $\Pi_\mathcal{D}$ for the set of translated propositions from $ \mathcal{D} $, e.g. if $ \mathcal{D}_C $ is the coin toss domain, $ \Pi_{\mathcal{D}_C} =$ (TC0--4).

\subsection{Translation of the domain-independent part}
We define the domain-independent part of our theory to be:\\

\noindent
\hspace*{\axiomsIndent}$\possVal(A,\true) \leftarrow \action(A). $\hfill(PEC1)\\
\hspace*{\axiomsIndent}$\possVal(A,\false) \leftarrow \action(A).$\\

\noindent
\hspace*{\axiomsIndent}$\fluentOrAction(X) \leftarrow \fluent(X); \, \action(X). $\hfill(PEC2)\\

\noindent
\hspace*{\axiomsIndent}$\literal( (X,V) ) \leftarrow \possVal(X,V). $\hfill(PEC3)\\

\noindent
\hspace*{\axiomsIndent}$\iLiteral( (L,I) ) \leftarrow \literal(L), \, \instant(I). $\hfill(PEC4)\\

\noindent
\hspace*{\axiomsIndent}$\definitelyPerformed(A,I) \leftarrow \performed(A,I,1).$\hfill (PEC5)\\

\noindent
\hspace*{\axiomsIndent}$\possiblyPerformed(A,I) \leftarrow \performed(A,I,P).$\hfill (PEC6)\\

\noindent
\hspace*{\axiomsIndent}$1\{ \  \holds(((X,V),I)) : \iLiteral(((X,V),I))  \  \}1 $\hfill(PEC7)\\
\hspace*{\axiomsMedIndent}$\leftarrow \instant(I), \, \fluentOrAction(X).$\\

\noindent
\hspace*{\axiomsIndent}$\inOcc(I) \leftarrow \instant(I), \, \causesOutcome( O, I ).$ \hfill (PEC8)\\

\noindent
\hspace*{\axiomsIndent}$1\{ \ \effectChoice( O, I ) : \causesOutcome( O, I ) \ \}1$ \hfill(PEC9) \\
\hspace*{\axiomsMedIndent}$\leftarrow \inOcc(I).$\\

\noindent
\hspace*{\axiomsIndent}$1\{ \  \initialChoice( O ) : \initialCondition( O ) \  \}1.$ \hfill(PEC10)\\

\noindent
\hspace*{\axiomsIndent}$\bot \leftarrow \action(A), \, \instant(I),$\hfill (PEC11)\\
\hspace*{\axiomsMedIndent}$\holds( ((A,\true),I) ), \, not \, \possiblyPerformed(A,I).$\\

\noindent
\hspace*{\axiomsIndent}$\bot \leftarrow \action(A), \, \instant(I),$\hfill (PEC12)\\
\hspace*{\axiomsMedIndent}$\holds( ((A,\false),I) ), \, \definitelyPerformed(A,I).$\\

\noindent
\hspace*{\axiomsIndent}$\bot \leftarrow \initialChoice( (S,P) ), \, \literal(L), $\hfill (PEC13)\\
\hspace*{\axiomsMedIndent}$\belongsTo( L, S ), \, not \, \holds( (L,0) ).$\\

\noindent
\hspace*{\axiomsIndent}$\bot \leftarrow \instant(I), \, \effectChoice( (X,P) , I), \, $\hfill (PEC14)\\
\hspace*{\axiomsMedIndent}$ \fluent(F), \, \belongsTo((F,V), X), $\\
\hspace*{\axiomsMedIndent}$not \, \holds( ((F,V),I+1) ), \, I<\maxinst.$\\

\noindent
\hspace*{\axiomsIndent}$\bot \leftarrow \instant(I), \, \fluent(F), \, not \, \holds( ((F,V),I) ), $\hfill (PEC15)\\
\hspace*{\axiomsMedIndent}$\effectChoice( (X,P) , I), \, not \, \belongsTo((F,V), X), $\\
\hspace*{\axiomsMedIndent}$\holds( ((F,V),I+1) ), \, I<\maxinst.$\\

\noindent
\hspace*{\axiomsIndent}$\bot \leftarrow \fluent(F), \, \instant(I), \, \holds( ((F,V),I) ), $\hfill (PEC16)\\
\hspace*{\axiomsMedIndent}$not \, \inOcc(I), \, not \, \holds( ((F,V),I+1) ),$\\
\hspace*{\axiomsMedIndent}$I<\maxinst.$\\

\noindent
\hspace*{\axiomsIndent}$\eval(A,I,P) \leftarrow \action(A), \, \instant(I), $\hfill (PEC17)\\
\hspace*{\axiomsMedIndent}$\performed(A,I,P), \, \holds( ((A,\true),I) ).$\\

\noindent
\hspace*{\axiomsIndent}$\eval(A,I,1-P) \leftarrow \action(A), \, \instant(I), $\hfill (PEC18)\\
\hspace*{\axiomsMedIndent}$\performed(A,I,P), \, \holds( ((A,\false),I) ).$\\

(PEC1--4) implement the basic predicates and sorts of PEC, namely: (PEC1) states that actions are boolean; (PEC2) defines a characteristic predicate for $\mathcal{F} \cup \mathcal{A}$; (PEC3) and (PEC4) define literals and i-literals, respectively. (PEC5) and (PEC6) define the two auxiliary predicates $\definitelyPerformed$ and $\possiblyPerformed$ representing the sets of actions and instants such that $ A $ is certainly performed at $ I $ (i.e., with probability 1) and such that $ A $ might have been performed at $ I $ (i.e., with a probability greater than $ 0 $) respectively. Proving that (PEC1--6) correctly characterise the sorts and sets they stand for is trivial and is omitted here.

Intuitively, axioms (PEC7--18) correspond to the definitions introduced in the previous section, namely: (PEC7) corresponds to Definition \ref{def:worlds}, (PEC8) defines a characteristic predicate for $\occ$ as in Definition \ref{def:causeOccurrence}, (PEC9) and (PEC14--16) correspond to justified change, (PEC10) and (PEC13) corresponds to the initial condition, (PEC11) and (PEC12) correspond to CWA for actions.

We denote the domain-independent part of our theory, i.e. (PEC1--18), by $\Pi_I$. Notice that axioms (PEC11--16) are constraints, and in the following will be referred to as $ \Pi_C $.
\section{Correctness}
We now show that the provided translation is sound and complete with respect to the definitions given in the previous sections. This proof relies on the Splitting Theorem \cite{lifschitz1994splitting}, a useful tool to obtain the answer sets of a ground program. Informally, a set $U$ of atoms is a splitting set for a program $\Pi$ if, for every rule in $\Pi$, if $U$ contains some atom in the head of such rule, then it also contains all the atoms occurring in that rule. For instance, if $\Pi' = \{ a \leftarrow not \, b, b\leftarrow c, c \}$ then $\{a,b,c\}$, $\emptyset$, $\{ b,c \}$ and $\{ c \}$ are splitting sets for $\Pi'$, whereas $\{ a, b \}$, $\{ a \}$ and $\{ b \}$ are not.

A splitting set $U$ splits an answer set program $\Pi$ into a bottom program $\botProgram{U}{\Pi}$ and a top program $\topProgram{U}{\Pi} = \Pi \setminus \botProgram{U}{\Pi}$. With the program $\Pi'$ defined as above, $U'=\{ c \}$ splits $\Pi'$ into $\botProgram{U'}{\Pi'} = \{ c \}$ and $\topProgram{U'}{\Pi'} = \{ a \leftarrow not \,  b, b \leftarrow c \}$.

The splitting set theorem states that the answer sets of $\Pi$ are exactly those that can be expressed as $X \cup Y$ for $X$ an answer set of $\botProgram{U}{\Pi}$ and $Y$ an answer set of $\evaluation{U}{\topProgram{U}{\Pi},X}$, where $\evaluation{U}{\Pi, Z}$ for a generic program $\Pi$, set of atoms $U$ and answer set $Z$ denotes the \emph{partial evaluation of the program $ \Pi $ w.r.t. $ U $} which is defined as follows: a rule $r$ is in $\evaluation{U}{\Pi,Z}$ if and only if there exists a rule $r' \in \Pi$ such that all literals in the body of $r'$ with at least an atom of $U$ occurring in them are also in $Z$, and the rule $r$ is obtained from $r'$ by removing all the occurrences of such literals. If we consider $\Pi'$ and $U'$ again and let $X'$ be the only answer set $\{ c \}$ of $\botProgram{U'}{\Pi'} = \{ c \}$, $\Pi'' = \evaluation{U'}{\topProgram{U'}{\Pi'},X'} = \{ a \leftarrow not \,  b, b \}$ and notice that now we can split $\Pi''$ itself. If we let $U''=\{ b \}$, then $ \botProgram{U''}{\Pi''} = \{ b \} $ and $\Pi''' = \evaluation{U''}{\topProgram{U''}{\Pi''},X''} = \emptyset$ for the only answer set $X''=\{ b \}$ of $\botProgram{U''}{\Pi''}$. The answer sets of the original program $\Pi'$ can now be obtained as $X' \cup X'' \cup X'''$, where $X'=\{ c \}$ is the answer set of $\botProgram{U'}{\Pi'}$, $X''=\{ b \}$ is the answer set of $\botProgram{U''}{\Pi''} = \{ c \}$ and $X''' = \emptyset$ is the answer set of $\Pi'''$. Then, the program $\Pi'$ has only one answer set $\{ b, c \}$.

In the following, we will use the fact that answer sets of a choice rule $\{ a_1, \dots, a_n \}$ are the power set $\{ \emptyset, \{a_1\}, \dots, \{a_n\}, \{a_1,a_2\}, \dots, \{ a_1, \dots, a_n \} \}$, and that answer sets of a constrained choice rule $X \{ a_1, \dots, a_n \} Y$ are the answer sets of $\{ a_1, \dots, a_n \}$ with cardinality $\geq X$ and $\leq Y$. Also, we use the fact that the only answer set of the program $\{ {p(X):q(X)}, q(a_1), \dots, q(a_n) \}$, where ${p(X):q(X)}$ is called a conditional literal, is $\{ p(a_1), \dots, p(a_n), q(a_1), \dots, q(a_n) \}$. Notice that conditional literal and choice rules can be combined so that e.g. answer sets of the program $\{ q(a,b), q(a,c), q(b,c), 1\{p(X):q(a,X)\}1 \}$ are $\{ q(a,b), q(a,c), q(b,c), p(b) \}$ and $\{ q(a,b), q(a,c), q(b,c), p(c) \}$. Finally, constraints are used to eliminate answer sets that satisfy its body, e.g. answer sets of the program $\{ q(a,b), q(a,c), q(b,c), 1\{p(X):q(a,X)\}1, {\bot \leftarrow p(b)} \}$ are the answer sets of $\{ q(a,b), q(a,c), q(b,c), 1\{p(X):q(a,X)\}1 \}$ that do not satisfy $p(b)$, hence $\{ q(a,b), q(a,c), q(b,c), p(c) \}$ is its only answer set.

The splitting set theorem can only be applied to ground programs, hence in the following we will interpret non-ground clauses as shorthand for the set of all their ground instances, e.g. the clause $p(X) \leftarrow q(X,Y)$ from the program $\{ p(X) \leftarrow q(X,Y), q(a,b) \}$ is shorthand for the set $\{ p(a) \leftarrow q(a,a), p(a) \leftarrow q(a,b), p(b) \leftarrow q(b,a), p(b) \leftarrow q(b,b) \}$.

Before proving the correctness of the implementation, we need to define a correspondence between answer sets and traces. This is the aim of the following definitions:

\begin{definition}[Manifest Choice Element] We say that the choice element $ \choiceElement{\fluentState{X}}{P^+}{I} $ is \emph{manifest in the answer set $ Z $} if and only if there exists a symbol $ \id $ such that $ \effectChoice( (\id,p^+) , i ) \in Z $ and such that $L \in \fluentState{X}$ if and only if $ \belongsTo(l, \id) \in Z$ (recall that $p^+$, $i$ and $l$ are the ASP representations of $P^+$, $ I $ and $L$ respectively).
\end{definition}

\begin{definition}[Trace of an answer set] The \emph{trace of an answer set $ Z $} is the trace $\langle O_{\initialChoiceSymbol}@\initialChoiceSymbol, O_1@I_1, \dots, O_n@I_n \rangle$ where $O_{\initialChoiceSymbol}@\initialChoiceSymbol, O_1@I_1, \dots, O_n@I_n$ are exactly the manifest choice elements in $ Z $ ordered according to instants $I_1, \dots, I_n$.
\end{definition}

\begin{proposition}
A candidate trace $tr$ is a trace of $\mathcal{D}$ if and only if there exists an answer set $Z_{\tr}$ of ${\Pi_\mathcal{D} \cup \Pi_I}$ such that $tr$ is a trace of $ Z_{\tr} $.
\end{proposition}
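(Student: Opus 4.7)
The plan is to exploit the Splitting Theorem of Lifschitz and Turner iteratively, stratifying $\Pi_\mathcal{D} \cup \Pi_I$ so that each layer either deterministically produces auxiliary predicates, introduces a choice that mirrors one of the semantic choices in PEC, or imposes a constraint that corresponds to one of the well-behavedness conditions. The overall target is a bijection between answer sets of $\Pi_\mathcal{D}\cup\Pi_I$ and pairs $(W,\tr)$ where $W\in\mathcal{W}_\mathcal{D}$ and $\tr\in TR^W_\mathcal{D}$; once this bijection is in place, the ``trace of an answer set'' map is well defined and the biconditional follows directly from Definition \ref{def:traceOfADomainDescription}.

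First I would take as bottom splitting set the atoms defined by the facts in $\Pi_\mathcal{D}$ ($\fluent$, $\action$, $\instant$, $\possVal$, $\belongsTo$, $\initialCondition$, $\causesOutcome$ bodies that do not depend on $\holds$, and $\performed$) together with (PEC1--6), all of which are stratified. This layer has a unique answer set that encodes the sorts $\mathcal{F},\mathcal{A},\mathcal{I}$, the literal and i-literal sets, and the $\definitelyPerformed/\possiblyPerformed$ shadows of $\narr(\mathcal{D})$. Next I would split on (PEC7): its partial evaluation against the bottom answer set is a choice rule that, for each pair $(I,X)$ with $X\in\mathcal{F}\cup\mathcal{A}$, selects exactly one value in $\vals(X)$, so the answer sets of this extended program are in bijection with the worlds $W:\mathcal{I}\to\mathcal{S}$ of Definition \ref{def:worlds}. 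Above this layer, (PEC8) is again deterministic and produces $\inOcc(i)$ exactly for $i\in\occ_\mathcal{D}(W)$, since $\causesOutcome(o,i)$ depends on $\holds$ atoms reflecting the body of a c-proposition and $\cprop_\mathcal{D}(W,I)$ is unique when defined. Splitting on (PEC9) and (PEC10) then introduces exactly one initial choice element and, for each $I\in\occ_\mathcal{D}(W)$, exactly one effect choice element drawn from $\head(\cprop_\mathcal{D}(W,I))$, in one-to-one correspondence with the pairs $(\ic,\ec)$ of Definitions \ref{def:initialChoice} and \ref{def:effectChoice}.

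The top layer $\Pi_C$ consists of the constraints (PEC11--16) and has no head atoms, so by the Splitting Theorem it merely discards those answer sets in which some constraint body is satisfied. Here I would argue constraint-by-constraint: (PEC11) together with (PEC12) rules out precisely those $W$ failing CWA for actions (Definition \ref{def:cwa}); (PEC13) discards the answer sets in which the chosen $\initialChoice(o)$ is not reflected in $\holds$ at instant $0$, i.e.\ those worlds whose fluent state at $\initialInstant$ does not equal $\chi(\ic)$, capturing the initial condition (Definition \ref{def:initialCondition}); and (PEC14), (PEC15), (PEC16) together encode the justified change condition (Definition \ref{def:justifiedChange}) by enforcing, at every pair of consecutive instants $I,I+1$, that (a) every fluent literal in the chosen outcome $\chi(\ec(I))$ holds at $I+1$, (b) no fluent value changes at $I+1$ unless it is prescribed by $\chi(\ec(I))$, and (c) inertia applies when $I\notin\occ_\mathcal{D}(W)$. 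The key algebraic identity to verify here is that (PEC14) and (PEC15) together are equivalent to $W(I+1)\restriction\mathcal{F} = (W(I)\restriction\mathcal{F})\oplus\chi(\ec(I))$ instant by instant, which by induction on the length of an interval $[I,I')$ recovers Equation \eqref{eq:justifiedChange}.

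Combining these three steps yields that an answer set $Z$ of $\Pi_\mathcal{D}\cup\Pi_I$ exists if and only if the underlying world $W$ is well-behaved and the manifest choice elements form a pair $(\ic,\ec)$ consistent with $W$; the trace of $Z$, read off via manifest choice elements, is then by construction equal to the associated trace $\tr=(\ic,\ec)\in TR^W_\mathcal{D}$, and conversely every such trace of $\mathcal{D}$ can be realised by choosing the $\holds$ atoms to match $W$, the $\initialChoice$ atom to match $\ic$ and the $\effectChoice$ atoms to match $\ec$. I expect the main obstacle to be the justified-change step: the semantic definition refers to intervals $[I,I')$ and accumulated updates $\chi(O_1)\oplus\dots\oplus\chi(O_n)$, whereas the ASP encoding is local, acting only between $I$ and $I+1$. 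Verifying the equivalence requires showing that the conjunction of the local constraints (PEC14)--(PEC16), iterated across consecutive instants, telescopes to the global update of Definition \ref{def:justifiedChange}, and that the case split on $I\in\occ_\mathcal{D}(W)$ versus $I\notin\occ_\mathcal{D}(W)$ is handled correctly by the guards $\inOcc(I)$ and $not\,\inOcc(I)$ implicit in these rules.
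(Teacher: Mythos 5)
Your proposal follows essentially the same route as the paper: iterated application of the Splitting Theorem, layering sorts, then the world-generating choice rule (PEC7), then the occurrence/choice rules (PEC8--10) matched to Definitions \ref{def:causeOccurrence}--\ref{def:effectChoice}, and finally treating (PEC11--16) as filters implementing CWA, the initial condition and justified change, yielding the correspondence between answer sets and pairs $(W,\tr)$ with $W$ well-behaved. The only cosmetic differences are that the paper processes $C_\mathcal{D}$, $I_\mathcal{D}$ and $P_\mathcal{D}$ as three independent partially evaluated subprograms and also accounts for the $\eval$ layer (PEC17--18), which you omit but which does not affect the trace correspondence.
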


\begin{proof} Let $ \Pi $ be the ground program obtained by grounding $ \Pi_\mathcal{D} \cup \Pi_I $. We split $ \Pi $ with respect to the set $U$ of all possible groundings of the predicates $\fluent$, $ \action $, $ \instant $ and $\possVal$. The bottom $ \botProgram{U}{\Pi} $ is guaranteed by the translation process to have a unique answer set $Z_\mathcal{L}$ which includes a correct representation of the domain language $ \mathcal{L} $, i.e. of the three sorts $ \mathcal{F} $, $ \mathcal{A} $ and $ \mathcal{I} $ and of the function $\vals$ (note that the definition of $ \mathcal{V} $ is implicitly derived from that of $ \vals $ and that our implementation is restricted to the case where $ \leq = \leq_{\mathbb{N}} $ and $ \initialInstant = 0 $).
	

We now split the partially evaluated top $ \Pi^{(1)} = \evaluation{U}{\topProgram{U}{\Pi},Z_\mathcal{L}} $ using the set $U^{(1)}$ consisting of all possible groundings of the predicate $ \holds $. The bottom $\botProgram{U^{(1)}}{\Pi^{(1)}}$ consists only of (PEC7) and has answer sets that correspond to any possible world in the domain language, i.e., (PEC7) generates every possible function from instants to states, hence for a particular world $W\in\mathcal{W}$ we denote by $Z_W$ the corresponding answer set of $\botProgram{U^{(1)}}{\Pi^{(1)}}$, and we are allowed to interpret $\holds( ((x,v),i) ) \in Z_{W}$ as $\lit{X}{V}\in W(I)$.

Notice that for any fixed $W\in \mathcal{W}$ the three sets of propositions $\Pi^{(2)} = \evaluation{U^{(1)}}{\text{(PEC8--9)} \cup C_\mathcal{D},Z_W}$, $\Pi^{(3)} = \evaluation{U^{(1)}}{\text{(PEC10)}\cup I_\mathcal{D},Z_W}$ and $\Pi^{(4)} = \evaluation{U^{(1)}}{\text{(PEC17--18)} \cup P_\mathcal{D},Z_W}$ are independent of each other so we can evaluate their answer sets separately.

We start with the set $\Pi^{(2)}$. If a c-proposition $c=$``$\theta \causesOneOf \{ O_1, O_2, \dots, O_m \}$'' in $\mathcal{D}$ is activated at $ I $ in $W$ w.r.t. $ \mathcal{D} $, i.e. $W\mmodels [\theta]@I$, then also the preconditions of the translated c-proposition in $ C_\mathcal{D} $ are satisfied (as $Z_W$ correctly represents $W$), and $\Pi^{(2)}$ will contain the facts $\causesOutcome( (id^n_j, p), i )$ for $1\leq j\leq m$ and $c$ being the $n$th c-proposition in the enumeration fixed during the translation process (see Section \ref{sec:translationdomaindependent} for reference), alongside the corresponding $\belongsTo$ facts in $ C_\mathcal{D} $ which we assume correctly represent the $\in$ relation for outcomes, i.e., $\belongsTo( (x,v), id^n_j ) \in \Pi^{(2)}$ if and only if $\lit{X}{V} \in O_j$. The converse is a straightforward inversion of this reasoning.

For a fixed $I$, if we denote by $C_{W,I}$ the set of facts of the form $\causesOutcome( (id^n_j, p), i )$ that are in $\Pi^{(2)}$, what we have just shown yields:
\begin{equation}\label{eq:CWI}
C_{W,I} \Leftrightarrow \text{a cause occurs in $W$ at $I$}
\end{equation}

If we now let $U^{(2)}$ be a splitting set such that it contains all possible groundings of the $\causesOutcome$ predicate, we get that the the only answer set of $\botProgram{U^{(2)}}{ \Pi^{(2)} }$ is $ B_C\cup C_W $, where $ B_C $ is the set of $ \belongsTo $ facts contained in $ C_\mathcal{D} $ and $ C_W $ is defined as:
\begin{equation}
	C_W = \bigcup_{I\in{\mathcal{I}}} C_{W,I}
\end{equation}

The partially evaluated top $\Pi^{(2)}_{1} = \evaluation{U^{(2)}}{\Pi^{(2)},B_C \cup C_W} $ includes the following set of facts:\\

\noindent
$O_W = \{ \inOcc(i) \mid \exists o : \causesOutcome(o,i) \in C_W \} =\{ \inOcc(i) \mid I\in\mathcal{I}, \, C_{W,I}\neq \emptyset \}$\\
\hspace*{\axiomsMedIndent}$=\{ \inOcc(i) \mid \text{a cause occurs in $W$ at $I$} \}$\\

\noindent
where we have used Equation \eqref{eq:CWI} to derive the last equality. Therefore, we can interpret $\inOcc(i)\in O_W$ as $I\in \inOcc_\mathcal{D}(W)$.

Let $U^{(2)}_1$ be the splitting set consisting of all possible groundings of $ \inOcc $. The only answer set of $\botProgram{U^{(2)}_1}{ \Pi^{(2)}_{1}}$ is $O_W$, and we now need to evaluate and find the answer sets of $\Pi^{(2)}_2 = \evaluation{U^{(2)}_1}{\topProgram{U^{(2)}_1}{\Pi^{(2)}_{1}},O_W}$ which now consists only of a partially evaluated (PEC7).

The role of (PEC7) is to implement the $\effectChoice$ function. Indeed, for each instant $I$ such that $I\in occ_\mathcal{D} (W)$, exactly one atom of the form $\effectChoice( o , i)$ is included in an answer set of $\Pi^{(2)}_2$ for some $o$ such that $\causesOutcome( o, i ) \in C_{W}$. Since this is consistent with definition \ref{def:effectChoice}, we can interpret $ \effectChoice(o,i) $ as its intended semantic counterpart $\ec(I)=O$ where $\ec$ is an effect choice function for $W$ w.r.t. $\mathcal{D}$. For an effect choice function $\ec$ for $W$ w.r.t. $\mathcal{D}$, we call the corresponding answer set that encodes it $E_{\ec}$.

Applying the splitting theorem, we can now conclude that answer sets of $\Pi^{(2)}$ are exactly those given by the set $ \{ B_C \cup C_W \cup O_W \cup E_{\ec} \mid \ec$ is an effect choice function for $W$ w.r.t. $ \mathcal{D} \}$.

Answer sets of $\Pi^{(3)}$ correspond to the $\initialChoice$ constant and can be worked out in a similar way as in the effect choice function case. It can be shown that $\initialChoice(o)$ correctly represents an initial choice $\ic$ as in definition \ref{def:initialCondition}, and answer sets of $\Pi^{(3)}$ are given by $I_\mathcal{D} \cup I_{\ic}$, where the singleton set $I_{\ic}$ consists only of an encoded $\ic$ for an initial choice $\ic$ w.r.t. $\mathcal{D}$.

Finally we need to derive answer sets of $ \Pi^{(4)} $. We split it using $ U^{(4)} $ consisting of all $ \performed $ ground facts. The bottom $\botProgram{U^{(4)}}{ \Pi^{(4)} }$ has answer set $ P_\mathcal{D} $ itself (notice that $ P_\mathcal{D} $ contains only ground facts), and we are left with calculating answer sets of $ \Pi^{(4)}_1 = \evaluation{{U^{(4)}}}{\topProgram{U^{(4)}}{\Pi^{(4)},P_\mathcal{D}},P_\mathcal{D}} $. The aim of $ \eval $ is that of implementing Equation \eqref{eq:evalpproposition}. It is important to notice here that, thanks to requirement iv) in Definition \ref{def:domaindescription}, it is possible to label a p-proposition ``$ A \performedAt I \withProb P^+ $'' using only $ A $ and $ I $. Comparing Equation \eqref{eq:evalpproposition} with (PEC17--18) immediately gives that the only answer set of $ \Pi^{(4)} $ is $ P_\mathcal{D} \cup {\Ev}_W $ where \\
$\Ev_W = \Big\{ \eval(a,i,p) \mid A\in\mathcal{A}, I\in\mathcal{I}, \text{``$ A \performedAt I \withProb P^+ $''} \in \mathcal{D},$\\
\hspace*{\axiomsBigIndent}$P=P^+ \text{ if } W\mmodels \ifor{A}{I}, P=1-P^+ \text{ otherwise} \Big\}$.

We are now able to calculate the answer sets of the whole program $\Pi \setminus \Pi_C$, which are given by the set $\Big\{ Z_\mathcal{L} \cup Z_W \cup B_C \cup C_W \cup O_W \cup E_{\ec} \cup I_{\mathcal{D}} \cup I_{\ic} \cup P_\mathcal{D} \cup \Ev_W$, for $ W\in \mathcal{W} $, an effect choice $\ec$ for $W$ w.r.t. $\mathcal{D}$ and an initial choice $\ic$ w.r.t. $\mathcal{D} \Big\}$

Finally, we take into account the constraints $ \Pi_C $, whose effect is that of implementing the Closed World Assumption and the effects of initialisation and persistence. Since (PEC11--16) are constraints, they eliminate those answer sets of $\Pi \setminus \Pi_C$ that satisfy their bodies.

If we let $Z_W$ be the world encoded in an answer set $Z$ of $\Pi \setminus \Pi_C$, (PEC11) and (PEC12) ensure that:\\

\noindent
$\holds(((a,\true),i)) \in Z_W \Rightarrow \exists p>0, \, \performed(a,i,p) \in \Pi_\mathcal{D}$\\
\hspace*{\axiomsMedIndent}$\Leftrightarrow \text{``$A \performedAt I \withProb P^+$''} \in \mathcal{D}$  \\

\noindent
and, conversely

\noindent
$\holds(((a,\false),i)) \in Z_W \Rightarrow \performed(a,i,1) \in \Pi_\mathcal{D}$\\
\hspace*{\axiomsMedIndent}$\Leftrightarrow \text{``$A \performedAt I \withProb 1$''} \in \mathcal{D}$  \\

\noindent
therefore the world encoded in $Z_W$ must satisfy CWA, i.e. definition \ref{def:cwa}.

Let now ``$\initiallyOneOf ( O_1, O_2, \dots, O_m )$'' be an i-proposition in $ \mathcal{D} $ and $Z_W$ be as before. (PEC14) makes sure that:\\

\noindent
$\holds(((f,v),0)) \in Z_W \Leftrightarrow \exists s : \{ \initialChoice( (s,p) ), \belongsTo( (f,v), s ) \} \subseteq \Pi_\mathcal{D} \Leftrightarrow $\\
$\exists O : O\in \{O_1, \dots, O_m\}, \tilde{S}=\chi(O), [F=V]\in \tilde{S}. $\\

\noindent
which satisfies the initial condition, i.e. definition \ref{def:initialChoice}.

Finally we consider (PEC14--16). Let $I,I'$ be two instants with $I<I'$ as in definition \ref{def:justifiedChange}, consider the world encoded in $Z_W$ and let $W(I)\restriction\mathcal{F}=\tilde{S}$ and $W(I')\restriction\mathcal{F}=\tilde{S}'$. Assume that the $ \effectChoice $ function encoded in $Z$ maps instants in $\inOcc_\mathcal{D}(W) \cap [I,I')$ to outcomes $O_1, O_2, \dots, O_n$. Axiom (PEC16) makes sure that $\tilde{S}$ cannot be altered if $I \notin \inOcc_\mathcal{D}(W)$. Therefore $\tilde{S}$ can only change at instants $I \in \inOcc_\mathcal{D} (W)$. We now show that $\tilde{S}'$ is actually equal to $\tilde{S} \oplus O_1 \oplus O_2 \oplus \dots \oplus O_n$. If not, and considering that our implementation is restricted to a finite set of instants, either (i) there is a fluent literal $L \in \chi(O)$ for some $O \in \{ O_1, O_2, \dots, O_n \}$ and an instant $I'' \in [I,I')$ such that $ L \in \chi(O) $ but $L \notin W(I'' + 1)$, or (ii) for some $O \in \{ O_1, O_2, \dots, O_n \}$ and a fluent literal $L=[F=V]$ such that $L \notin O$ and $L \notin W(I'')$, $L \in W(I''+1)$. Both (i) and (ii) are forbidden by (PEC15) and (PEC16) respectively, by considering that the answer set $Z$ correctly represents the semantic objects that it encodes.
\end{proof}

\section{Related Work}
Although there is existing work on probabilistic reasoning about actions, most is based on Reiter's variant of  Situation Calculus (SC) \cite{reiter2001knowledge}, with focus on hypothetical rather than narrative reasoning. An exception is Prob-EC (see below).

Of the SC approaches, the Bacchus-Halpern-Levesque framework \cite{bacchus1999reasoning} is a cornerstone of early work integrating probabilistic knowledge with logical formalisms for reasoning about actions, and incorporates epistemic notions such as sensing actions. The Probabilistic Situation Calculus (PSC) \cite{mateus2001probabilistic} is extended to deal with knowledge-producing actions in \cite{mateus2002observations}. A reasoning system based on PSC able to perform temporal projection has been implemented by the authors in Wolfram Mathematica \cite{wolfram2003mathematicabook} and uses Monte Carlo methods for tractability. The language PAL \cite{baral2002reasoning} focuses on building an elaboration tolerant representation for Markov Decision Processes. It is based on Language $ \mathcal{A} $ \cite{gelfond1993representing} and oriented to counterfactual reasoning and observation assimilation. PAL uses two  kinds of unknown variables -- inertial and non-inertial -- to achieve an elaboration tolerant representation of domains. The action language $ \mathcal{E}+ $ \cite{iocchi2009e}, based on $ \mathcal{C}+ $ \cite{giunchiglia2004nonmonotonic},  supports both non-deterministic and probabilistic actions. Its main focus is on providing algorithms for the efficient computation of plans.
	
	To our knowledge, the Probabilistic Logic Programming Event Calculus (Prob-EC)~\cite{skarlatidis2015probec} is the only EC-style language in this class of formalisms other than PEC able to support reasoning about explicit event occurrences (narratives). Unlike our framework, which has its own bespoke semantics, Prob-EC is a logic programming framework based on the probabilistic logic programming language ProbLog \cite{deraedt2007problog} and therefore inherits and exploits its semantics. In~\cite{skarlatidis2015probec} Prob-EC is applied to human activity recognition. The authors describe how a set of long-term activities (LTAs) can be detected from a set of short-term activities (STAs). Such STAs, which constitute the input to the system, are treated as events happening at given instants and have probabilities attached.  This is a somewhat different approach than PEC's, motivated by its application to activity recognition, analogous to attaching probabilities to p-propositions (rather than i- and c-propositions). In other words Prob-EC's focus is on representing probabilistic knowledge about event occurrences rather than about their general causal effects.   


\section{Summary}
In this work, we present PEC, an EC variant for reasoning about actions in a narrative domain where actions can have probabilistic outcomes, and illustrated how for a wide sub-class of domains it can be implemented in ASP in a sound and complete way. Unlike Prob-EC \cite{skarlatidis2015probec} which follows the ``logic programming'' tradition, our formalism belongs to the ``action language'' tradition (originating in \cite{gelfond1993representing}, but see also \cite{kakas1997languageE} for the first EC style action language), and therefore its own specialised semantics. This makes of PEC portable in the sense that it is independent of any particular computational implementation. Its semantics is defined in terms of (possible) \emph{worlds}, with a view to adding epistemic features at a later date (see e.g.~\cite{moore1985knowledge}, \cite{scherl2003knowledge}).

In our initial experimentation with adding  epistemic features to PEC, we have focused on representing \emph{imperfect sensing actions} and \emph{actions conditioned on knowledge} acquired during  the progression of the narrative. These features are similar to those in the EFEC extension of FEC~\cite{miller2014epistemic}.  We envisage including \emph{s-propositions } such as 

\[
	\See \senses \Coin \withAccuracies \begin{pmatrix} 0.9\; & 0.1\\ 0.3\; & 0.7 \end{pmatrix}
 \]

\noindent which represents that our coin-tossing robot can imperfectly sense the current face showing on the coin, and \emph{conditional p-propositions} such as

\[
	\Toss \performedAt 2 \ifBelieves ( \lit{\Coin}{\Tails}, (0.65,1] )
 \]

\noindent which represents that the robot will toss again if it believes with a greater than 65\% probability that the first toss resulted in $\Tails$.  Preliminary results indicate that our possible worlds semantics can be readily extended to cover these notions.

There are several other ways in which the present work can be continued. For instance, the problem of \emph{elaboration tolerance}, which plays an important role in classical reasoning about actions, needs to be reviewed and solved in our setting. This problem has already been tackled in \cite{baral2002reasoning}, but needs to be restated in our framework due to the different way in which we introduce probabilities in PEC. A related point is that of \emph{underspecification}, i.e. what an agent can reasonably infer from a domain in which the initial conditions and the effects of actions are not entirely specified (even probabilistically). Finally, in our view a crucial point is that of \emph{computational efficiency}. Indeed, the intractability of several computational problems arising in this setting (such as temporal projection) suggests that techniques (e.g. Monte Carlo Markov Chain) are needed to efficiently approximate the correct answer to a given query with an appropriate degree of confidence.

\bibliography{phd}{}
\bibliographystyle{plain}

\end{document}